\pdfoutput=1
\documentclass{article}





\usepackage[final, nonatbib]{neurips_2021}


\usepackage[utf8]{inputenc} 
\usepackage[T1]{fontenc}    
\usepackage{hyperref}       
\hypersetup{colorlinks=false,linkcolor=blue}
\usepackage{url}            
\usepackage{booktabs}       
\usepackage{amsfonts}       
\usepackage{nicefrac}       
\usepackage{microtype}      
\usepackage{xcolor}         

\newcommand{\todo}[1]{\textcolor{blue}{[todo: #1]}}
\newcommand{\cameraready}[1]{#1}
\newcommand{\magenta}[1]{#1}

\bibliographystyle{abbrv}

\usepackage{bm}
\usepackage{amsmath}
\usepackage{amsthm}
\usepackage{amssymb}
\usepackage{float}
\usepackage{makecell}
\usepackage{adjustbox}
\usepackage{multirow}
\usepackage{wrapfig}
\usepackage{subcaption}

\newtheorem{thm}{Theorem}

\newtheorem{proposition}{Proposition}
\newtheorem{property}{Property}
\newtheorem{corollary}{Corollary}
\newtheorem{lemma}{Lemma}


\newcommand{\cutsectionup}{\vspace*{-0.15in}}
\newcommand{\cutsectiondown}{\vspace*{-0.12in}}
\newcommand{\cutsubsectionup}{\vspace*{-0.1in}}

\newcommand{\cutparagraphup}{\vspace*{-0.1in}}

\title{Transformers Generalize DeepSets and Can be Extended to Graphs and Hypergraphs}

\author{
  Jinwoo Kim, Saeyoon Oh, Seunghoon Hong \\
  School of Computing, KAIST\\
  \texttt{\{jinwoo-kim, saeyoon17, seunghoon.hong\}@kaist.ac.kr} \\
}

\begin{document}

\maketitle

\begin{abstract}
    We present a generalization of Transformers to any-order permutation invariant data (sets, graphs, and hypergraphs).
    We begin by observing that Transformers generalize DeepSets, or first-order (set-input) permutation invariant MLPs.
    Then, based on recently characterized higher-order invariant MLPs, we extend the concept of self-attention to higher orders and propose higher-order Transformers for order-$k$ data ($k=2$ for graphs and $k>2$ for hypergraphs).
    Unfortunately, higher-order Transformers turn out to have prohibitive complexity $\mathcal{O}(n^{2k})$ to the number of input nodes $n$.
    To address this problem, we present sparse higher-order Transformers that have quadratic complexity to the number of input hyperedges, and further adopt the kernel attention approach to reduce the complexity to linear.
    In particular, we show that the sparse second-order Transformers with kernel attention are theoretically more expressive than message passing operations while having an asymptotically identical complexity.
    Our models achieve significant performance improvement over invariant MLPs and message-passing graph neural networks in large-scale graph regression and set-to-(hyper)graph prediction tasks.
    Our implementation is available at \url{https://github.com/jw9730/hot}.
\end{abstract}

\section{Introduction}
Graph is a universal data modality used to model social networks \cite{qiu2018deepinf}, chemical compounds \cite{gilmer2017neural}, biological structures \cite{fout2017protein}, and interactions in particle physics \cite{kipf2018neural, serviansky2020set}.
Recent graph neural networks (GNNs) adopt a message-passing scheme \cite{zhou2018graph, zhang2018deep, wu2021a}, where the node features are propagated and aggregated recurrently according to the neighborhood structure given in the input graph.
Despite the simplicity, the local and recurrent nature makes them unable to discover dependency between any two nodes with a distance longer than the message-passing steps \cite{gu2020implicit}.
The locality of message-passing is also known to be related to the over-smoothing problem that prevents scaling of GNNs \cite{li2018deeper, cai2020a, oono2020graph}.

An alternative approach is using a more general set of operations that involve global interactions while respecting the permutation symmetry of graphs.
Such operations either produce the same output regardless of the node ordering of input graph (permutation invariance), or commute with node reordering (permutation equivariance).
Message-passing is an equivariant operation, but is restricted to local neighborhoods defined by the input graph.
Recently, Maron~et.~al.~(2019)~\cite{maron2019invariant} characterized the full space of invariant and equivariant linear layers.
It turned out that these layers span not only the local neighborhood interactions of message-passing GNNs, but also global interactions such as the one between disconnected nodes, and even edge-to-node, or node-to-edge interactions (Figure~\ref{fig:mpnn_and_linear}).
\begin{figure}[!t]
    \centering
    \includegraphics[width=0.99\textwidth]{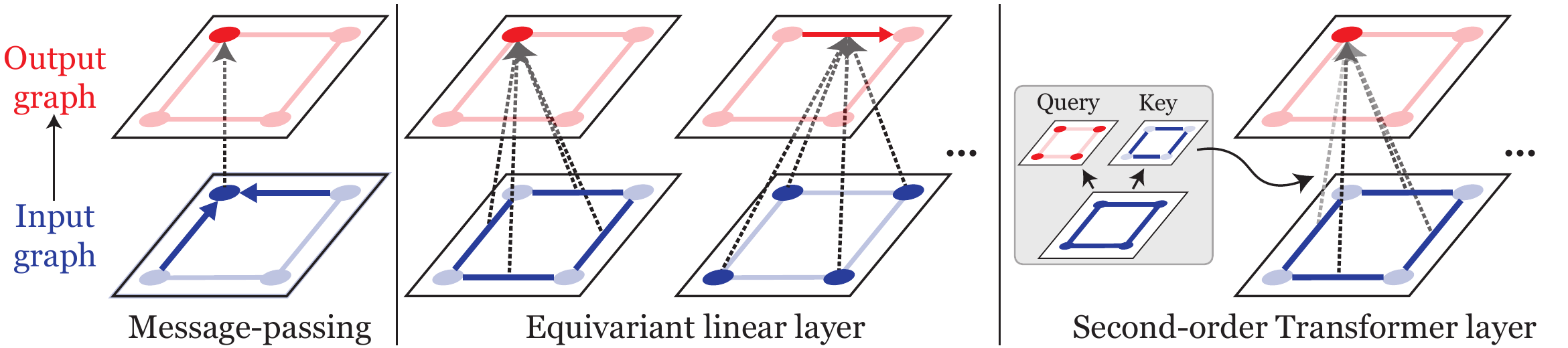}
    \caption{Illustrated operations of a message-passing GNN, an equivariant linear layer, and a second-order Transformer layer.
    A single output node is highlighted.}
    \label{fig:mpnn_and_linear}
    \vspace{-0.2cm}
\end{figure}
Notably, their formulation naturally extends to layers with different input \& output orders (e.g., edge in, node out), and higher-order layers for hypergraphs.
In theory, an invariant MLP composed of these layers should be more powerful than message-passing GNNs as they can model long-range dependency between nodes (Figure~\ref{fig:mpnn_and_linear}).
However, they are currently not widely adopted due to relatively low performance and high asymptotic memory complexity.

\paragraph{Contributions}
In this work, we present higher-order Transformers that address the low performance and high complexity of invariant MLPs.
First, based on an observation that the renowned Transformer encoder generalizes first-order equivariant linear layers or DeepSets \cite{zaheer2017deep}, we formulate higher-order Transformer layers that generalize equivariant linear layers by extending self-attention to higher orders (Figure~\ref{fig:mpnn_and_linear}).
Second, while Transformer layers with input and output orders $k,l$ has $\mathcal{O}(n^{k+l})$ asymptotic complexity, we show that by leveraging the sparsity of input hypergraphs, we can obtain $\mathcal{O}(m^2)$ complexity given input with $m$ hyperedges.
Further adopting kernel attention approaches, we propose a variant that reduces the complexity to $\mathcal{O}(m)$ and theoretically prove that they are more expressive than message passing networks.
Finally, we test higher-order Transformers on a range of tasks, and demonstrate that they achieve significant improvements over invariant MLPs and are highly competitive in performance and scalability to the state-of-the-art GNNs.
\section{Preliminary}\label{sec:preliminary}
\cutsectiondown
In this section, we first describe how (hyper)graphs can be treated as higher-order tensors.
We then describe linear layers that operate on higher-order tensors while respecting node permutation symmetry.
In particular, we analyze operations within \magenta{linear layers for second-order tensors (graphs)} and show they involve global interactions on top of local aggregation.

Let us define some notations.
We denote a set as $\{a, ..., b\}$, a tuple as $(a, ..., b)$, and denote $[n]=\{1, ..., n\}$.
We denote the space of order-$k$ tensors as $\mathbb{R}^{n^k \times d}$ where $d$ is feature dimension.
For an order-$k$ tensor $\mathbf{A}\in\mathbb{R}^{n^k\times d}$, we use multi-index $\mathbf{i}=(i_1, ..., i_k)\in[n]^k$ to denote $\mathbf{A}_{\mathbf{i}} = \mathbf{A}_{i_1, ..., i_k}\in\mathbb{R}^d$.
Let $S_n$ be the set of all permutations of $[n]$.
$\pi \in S_n$ acts on $\mathbf{i}$ by $\pi(\mathbf{i}) = (\pi(i_1), ..., \pi(i_k))$, and acts on an order-$k$ tensor $\mathbf{A}$ by $(\pi\cdot\mathbf{A})_{\mathbf{i}}=\mathbf{A}_{\pi^{-1}(\mathbf{i})}$.

\cutparagraphup
\paragraph{(Hyper)graphs as tensors}
Generally, a (hyper)graph data $G$ can be represented as a tuple $(V, \mathbf{A})$, where $V$ is a set of $n$ nodes and $\mathbf{A}\in\mathbb{R}^{n^k \times d}$ encodes features attached to hyperedges. 
The type of the hypergraph is indicated by the order $k$ of the tensor $\mathbf{A}$.
First-order tensor is a set of features (e.g., point cloud, bag-of-words) where $\mathbf{A}_i$ is the feature of node $i$.
Second-order tensor encodes edge features (e.g., adjacency) where $\mathbf{A}_{i_1, i_2}$ is the feature of edge $(i_1, i_2)$.
Generally, an order-$k$ tensor encodes hyperedge features (e.g., mesh) where $\mathbf{A}_{i_1, ..., i_k}$ is the feature of hyperedge $(i_1, ..., i_k)$.

\cutparagraphup
\paragraph{Permutation invariance and equivariance}
Our problem of interest is building a functional relation $f(\mathbf{A})\approx T$ between tensor $\mathbf{A}$ and target $T$.
If $T$ is a single output vector, we often require that $f$ is \emph{permutation invariant}, that it satisfies $f(\pi\cdot\mathbf{A}) = f(\mathbf{A})$;
if $T$ is a tensor $T=\mathbf{T}$, we often require that $f$ is \emph{permutation equivariant}, satisfying $f(\pi\cdot\mathbf{A})=\pi\cdot f(\mathbf{A})$, for all $\pi\in S_n$ and $\mathbf{A}\in\mathbb{R}^{n^k\times d}$.
In a typical design setup where a neural network $f$ is built using linear layers and non-linear activations, the construction of $f$ reduces to finding invariant and equivariant \emph{linear} layers.

\cutparagraphup
\paragraph{Invariant and equivariant linear layers}\label{sec:equivariant_linear}
\begin{figure}[!t]
    \centering
    \includegraphics[width=0.99\textwidth]{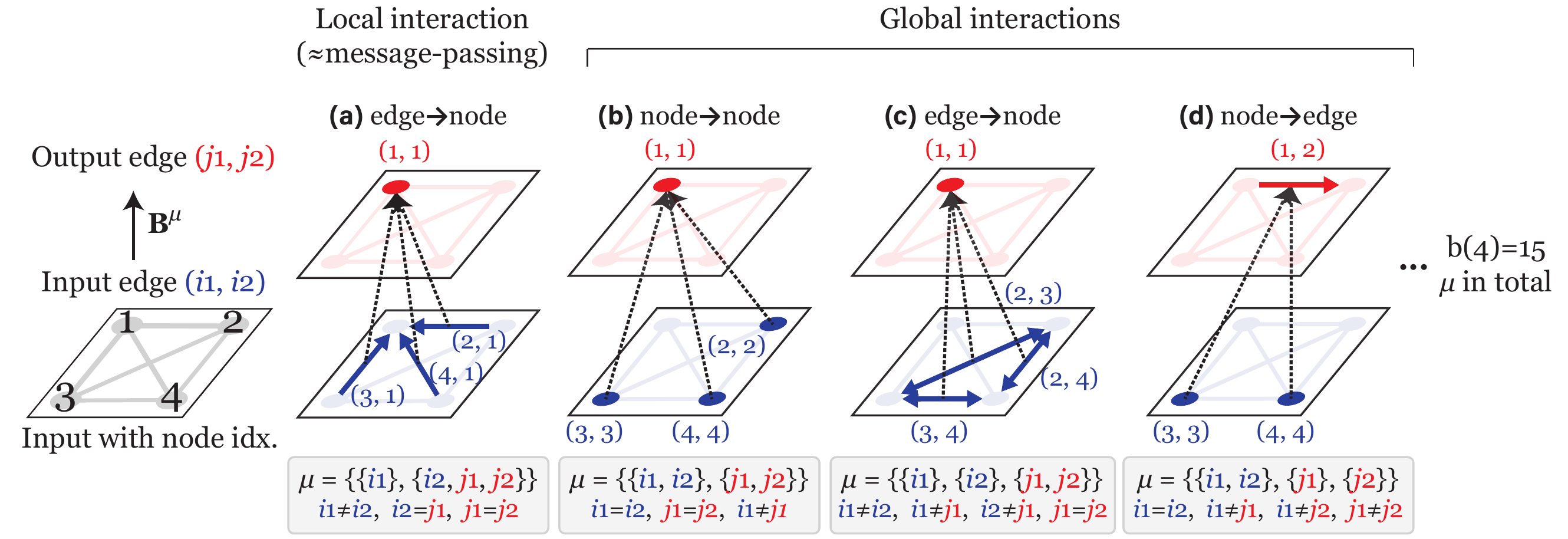}
    \caption{
    Example operations in a second-order layer $L_{2\rightarrow 2}$.
    We illustrate how input edges $(i_1,i_2)=\mathbf{i}$ are aggregated to an output edge $(j_1,j_2)=\mathbf{j}$ in various equivalence classes $(\mathbf{i},\mathbf{j})\in\mu$ via basis tensor $\mathbf{B}^\mu_{\mathbf{i},\mathbf{j}}$.
    Note that a loop ($(i_1, i_1)$ or $(j_1, j_1)$) represents a node.
    }
    \label{fig:equivariant_linear_layers}
    \vspace{-0.3cm}
\end{figure}
We summarize invariant linear layers $L_{k\rightarrow 0}: \mathbb{R}^{n^k \times d}\rightarrow\mathbb{R}^{d'}$ and equivariant linear layers $L_{k\rightarrow l}: \mathbb{R}^{n^k\times d}\rightarrow\mathbb{R}^{n^l\times d'}$ identified in Maron~et.~al.~(2019)~\cite{maron2019invariant}.
Note that invariant layer is a special case of $L_{k\rightarrow l}$ with $l=0$.
In summary, given an input $\mathbf{A}\in\mathbb{R}^{n^k\times d}$, the order-$l$ output of an equivariant layer $L_{k\rightarrow l}$ can be written with $\mathbf{i}\in[n]^k,\mathbf{j}\in[n]^l$ as:
\begin{align}\label{eqn:equivariant_layer}
    L_{k\rightarrow l}(\mathbf{A})_{\mathbf{j}} = \sum_{\mu}{\sum_{\mathbf{i}}{\mathbf{B}^{\mu}_{\mathbf{i}, \mathbf{j}}\mathbf{A}_{\mathbf{i}}w_{\mu}}} + \sum_{\lambda}{\mathbf{C}^{\lambda}_{\mathbf{j}}b_{\lambda}},
\end{align}
where $w_\mu\in\mathbb{R}^{d\times d'}$, $b_\lambda\in\mathbb{R}^{d'}$ are weight and bias parameters, $\mathbf{B}^\mu\in\mathbb{R}^{n^{k+l}}$ and $\mathbf{C}^\lambda\in\mathbb{R}^{n^l}$ are \emph{basis tensors} (will be defined), and $\mu$ and $\lambda$ are \emph{equivalence classes} of order-$(k+l)$ and order-$l$ multi-indices, respectively.
The equivalence classes are defined upon equivalence relation $\sim$ that, for multi-indices $\mathbf{i}, \mathbf{j}\in[n]^k$, $\mathbf{i}\sim\mathbf{j}$ iff $(i_1, ..., i_k) = (\pi(j_1), ..., \pi(j_k))$ for some node permutation $\pi\in S_n$.
\magenta{Then, a multi-index $\mathbf{i}$ and all members $\mathbf{j}$ in its equivalence class have the same (permutation-invariant) equality pattern: $\mathbf{i}_{a} = \mathbf{i}_{b}\Leftrightarrow\mathbf{j}_{a} = \mathbf{j}_{b}$ for all $a, b\in [k]$.}
Consequently, each equivalence class $\mu$ (or $\lambda$) is a distinct set of all order-$(k+l)$ (or order-$l$) multi-indices having a specific equality pattern.

Notably, we can represent each equivalence class of order-$k$ multi-indices as a unique partition of $[k]$ regardless of $n$, where the partition specifies the equality pattern.
\emph{e.g.,} with $k=2$, we have two partitions and respective equivalent classes: $\mu_1=\{\{i_1, i_2\}\}$ the set of all $(i_1, i_2)$ with $i_1=i_2$, and $\mu_2=\{\{i_1\}, \{i_2\}\}$ the set of all $(i_1, i_2)$ with $i_1\neq i_2$.
Thus, with $\text{b}(k)$ the $k$-th Bell number or number of partitions of $[k]$, we have $\text{b}(k+l)$ equivalence classes $\mu$ for the weight and $\text{b}(l)$ equivalence classes $\lambda$ for the bias.
We guide the readers interested in derivation to Maron~et.~al.~(2019)~\cite{maron2019invariant}.

With the equivalence classes, basis tensors are then defined as follows:
\begin{align}\label{eqn:basis_tensor_weight_bias}
    \begin{array}{ll}
        \mathbf{B}_{\mathbf{i}, \mathbf{j}}^{\mu} = \left\{
        \begin{array}{cc}
            1   &  \scalebox{0.95}{$(\mathbf{i}, \mathbf{j})\in\mu$} \\
            0   &  \scalebox{0.95}{\text{ otherwise}}
        \end{array}\right.;&
        \mathbf{C}_{\mathbf{j}}^{\lambda} = \left\{
        \begin{array}{cc}
            1   &  \scalebox{0.95}{$\mathbf{j}\in\lambda$}\\
            0   &  \scalebox{0.95}{\text{ otherwise}}
        \end{array}\right.
    \end{array}
\end{align}

In Eq.~\eqref{eqn:equivariant_layer}, each equivalence class $\mu$ determines which (hyper)edges participate and how they interact in summation $\sum_\mathbf{i}\mathbf{B}_{\mathbf{i},\mathbf{j}}^\mu\mathbf{A}_\mathbf{i}$.
As an example, let us consider $L_{2\rightarrow 2}$ that maps input edges $\mathbf{i}=(i_1,i_2)$ to output edges $\mathbf{j}=(j_1,j_2)$.
An equivalence class $\mu = \{\{i_1, i_2\}, \{j_1, j_2\}\}$ represents all $(\mathbf{i},\mathbf{j})\in\mu$ that $i_1=i_2$, $j_1=j_2$, and $i_1\neq j_1$.
Then, due to masking by $\mathbf{B}^\mu$, only input elements $\mathbf{A}_{i_1,i_2}$ with $i_1=i_2$ participate in the summation and gives output $L_{2\rightarrow 2}(\mathbf{A})_{j_1, j_2}$ for $j_1=j_2$, $i_1 \neq j_1$. 
Intuitively, this is analogous to computing a node feature by sum-pooling all the other nodes (Fig.~\ref{fig:equivariant_linear_layers}(b)).
Different $\mu$ accounts for other interactions as shown in Figure~\ref{fig:equivariant_linear_layers}.
Notably, it contains a richer set of operations beyond local interactions modeled by message-passing (Fig.~\ref{fig:equivariant_linear_layers}(a)), such as global interaction across all nodes (Fig.~\ref{fig:equivariant_linear_layers}(b)), and edge-to-node (Fig.~\ref{fig:equivariant_linear_layers}(c)), node-to-edge interactions (Fig.~\ref{fig:equivariant_linear_layers}(d)).

We finish the section by writing out the first-order equivariant layer $L_{1\rightarrow 1}$.
As $\text{b}(2)=2$, the layer has two equivalence classes $\mu_1=\{\{i_1, j_1\}\}$ and $\mu_2=\{\{i_1\}, \{j_1\}\}$.
Then, we have $\mathbf{B}^{\mu_1}=I_n$ and $\mathbf{B}^{\mu_2}=1_n1_n^\top-I_n$, with $1_n\in\mathbb{R}^n$ vector of ones.
Then, given a set of features $\mathbf{A}\in\mathbb{R}^{n\times d}$,
\begin{align}
    L_{1\rightarrow 1}(\mathbf{A}) &= I_n\mathbf{A}w_1'+(1_n1_n^\top-I_n)\mathbf{A}w'_2+1_nb^{\top}\\
    &= I_n\mathbf{A}w_1+1_n1_n^\top\mathbf{A}w_2+1_nb^{\top},
    \label{eqn:deepset}
\end{align}
where $w_1,w_2,w'_1,w'_2\in\mathbb{R}^{d\times d'}$, $b\in\mathbb{R}^{d'}$.
$L_{1\rightarrow 1}$ is analogous to a combination of elementwise feedforward ($\mu_1$) and sum-pooling of set elements ($\mu_2$), and is also known as a DeepSet layer \cite{zaheer2017deep}.

\cutsectionup
\section{Higher-Order Transformers}\label{sec:higher_order_transformer}
\cutparagraphup
In Section~\ref{sec:preliminary}, we introduced higher-order linear equivariant layers $L_{k\to l}$, and showed that they contain various global and node/edge interactions that are not covered by message-passing.
In this section, we establish a connection between the first-order equivariant layer $L_{1\to 1}$ and self-attention of Transformer encoder layers \cite{vaswani2017attention}.
Then, we extend the relationship to higher orders by tensorizing queries and keys, and formulate higher-order Transformer layers.
We finish the section by proposing a principled parameter reduction for queries and keys, which reduces a fair amount of computation.

\subsection{Transformers generalize DeepSets}\label{sec:transformers_generalize_deepsets}
\cutparagraphup
As shown in Section~\ref{sec:preliminary}, first-order linear layer, or DeepSet, has a simple structure composed of feedforward and sum-pooling (Eq.~\eqref{eqn:deepset}).
Although it is theoretically proven to be a universal approximator of permutation-invariant functions \cite{zaheer2017deep}, static sum-pooling could be limited in capturing interactions of set elements, motivating the use of sophisticated pooling.
In particular, the self-attention mechanism of Transformer encoder \cite{vaswani2017attention} was shown to achieve a large performance gain in set modeling via context-aware weighted pooling \cite{lee2019set, yun2020are}.
To see this, let us write out the Transformer encoder layers.

A Transformer encoder layer is a function $\text{Enc}:\mathbb{R}^{n\times d }\to\mathbb{R}^{n\times d}$ consisting of two layers: a self-attention layer $\text{Attn}:\mathbb{R}^{n\times d }\to\mathbb{R}^{n\times d}$ and an elementwise feedforward layer $\text{MLP}:\mathbb{R}^{n\times d }\to\mathbb{R}^{n\times d}$.
For a set of $n$ input vectors $X\in\mathbb{R}^{n\times d}$, a Transformer layer computes the following:
\begin{align}
    \text{Attn}(X)_i &= X_i + \sum_{h=1}^H{\sum_{j=1}^n{\alpha^h_{i j}X_j w_h^V w_h^O}},\\
    \text{Enc}(X)_i &= \text{Attn}(X)_i +
    \text{MLP}(\text{Attn}(X))_i,
\end{align}
where $\alpha^h=\sigma(X w_h^Q(X w_h^K)^\top)$ is an attention coefficient with an activation $\sigma$, $H$ is the number of heads, $d_H$ is head size, $d_F$ is hidden dimension, and $w_h^O\in\mathbb{R}^{d_H\times d}$, $w_h^V,w_h^K,w_h^Q\in\mathbb{R}^{d\times d_H}$.\footnote{Note that we omitted normalization after $\text{Attn}(\cdot)$ and $\text{MLP}(\cdot)$ for simplicity as in \cite{yun2020are, hanin2017approximating}.}

Now, we show that Transformer layers are generalized first-order linear equivariant functions. 
By setting $\alpha_{ij}^h=1$ and assuming that $\text{MLP}(Y)$ approximates a linear layer $Yw^F+b^{F\top}$ following the universal approximation theorem \cite{hornik1989multilayer}, the Transformer layer reduces to the following
\footnote{In practice, Transformer employs softmax in attention and deviates from Deepsets due to normalization.}
:
\begin{align}
    \text{Enc}(X)_i &= X_i (I_n + w^F) + \sum_{j=1}^n{X_j} w^{VO}(I_n + w^F) + b^{F\top},
\end{align}
where $w^{VO}=\sum_{h=1}^H{w_h^V w_h^O}$.
This is equivalent to a DeepSet layer in Eq.~\eqref{eqn:deepset} with $w_1=I_n+w^F$, $w_2=w^{VO}(I_n+w^F)$, and $b = b^F$.
In other words, we can convert DeepSets to Transformers by changing the static pooling to attention and replacing elementwise linear mapping with an MLP.
We generalize this approach to higher-order input and output, which leads to the formulation of higher-order Transformers for graphs and hypergraphs.

\subsection{Higher-order Transformer layers}\label{sec:subsec_higher_order_transformer_layers}
\cutparagraphup
In Section~\ref{sec:transformers_generalize_deepsets}, we showed that Transformer layers are generalized first-order equivariant linear layers $L_{1\to 1}$.
Notably, the generalization procedure was equivalent to changing static pooling to attention and adding feedforward MLP.
In this section, we generalize the approach to $L_{k\to l}$ with arbitrary orders $k$ and $l$ and formulate higher-order Transformer layer $\text{Enc}_{k\to l}$.

In general, we define higher-order Transformer layer \magenta{as} a function $\text{Enc}_{k\to l}:\mathbb{R}^{n^k\times d}\to\mathbb{R}^{n^l\times d}$ \magenta{with} two layers: a higher-order self-attention layer $\text{Attn}_{k\to l}:\mathbb{R}^{n^k\times d }\to\mathbb{R}^{n^l\times d}$ and a feedforward layer $\text{MLP}_{l\to l}:\mathbb{R}^{n^l\times d }\to\mathbb{R}^{n^l\times d}$.
For an input tensor $\mathbf{A}\in\mathbb{R}^{n^k\times d}$, a Transformer layer computes:
\begin{align}
    \text{MLP}_{l\to l}(\text{Attn}_{k\to l}(\mathbf{A})) &= L_{l\to l}^2\left(\text{ReLU}(L_{l\to l}^1(\text{Attn}_{k\to l}(\mathbf{A})))\right),\label{eqn:higher_mlp}\\
    \text{Enc}_{k\to l}(\mathbf{A}) &= \text{Attn}_{k\to l}(\mathbf{A}) + \text{MLP}_{l\to l}(\text{Attn}_{k\to l}(\mathbf{A})),\label{eqn:higher_transformer_layer}
\end{align}
where $L_{l\to l}^1:\mathbb{R}^{n^{l}\times d\to n^{l}\times d_F}$ and $L_{l\to l}^2:\mathbb{R}^{n^{l}\times d_F\to n^{l}\times d}$ are equivariant linear layers with hidden dimension $d_F$.
Remaining question is how to define and compute higher-order self-attention $\text{Attn}_{k\to l}(\mathbf{A})$.

To design $\text{Attn}_{k\to l}$, we remove the bias from Eq.~\eqref{eqn:equivariant_layer} and introduce attention coefficients.
It is achieved by changing each $\mathbf{B}^\mu\in\mathbb{R}^{n^{k+l}}$ to an attention coefficient tensor $\boldsymbol{\alpha}^{h,\mu}\in\mathbb{R}^{n^{k+l}}$ with multiple heads:
\begin{align}\label{eqn:equivariant_layer_attcoef}
    \text{Attn}_{k\to l}(\mathbf{A})_{\mathbf{j}} &= \sum_{h=1}^H{\sum_{\mu}{\sum_{\mathbf{i}}{\boldsymbol{\alpha}^{h,\mu}_{\mathbf{i}, \mathbf{j}}\mathbf{A}_{\mathbf{i}}w^V_{h,\mu}w^O_{h,\mu}}}},
\end{align}
where $w^O_{h,\mu}\in\mathbb{R}^{d_H\times d}$, $w^V_{h,\mu}\in\mathbb{R}^{d\times d_H}$ are learnable parameters, 
$H$ denotes the number of heads, and $d_H$ denotes head size.
Then similar to first-order case (Section~\ref{sec:transformers_generalize_deepsets}), we can show the following:

\begin{thm}\label{thm:generalization}
$\text{Enc}_{k\to l}$ (Eq.~\eqref{eqn:higher_transformer_layer}) \magenta{is a generalization of} $L_{k\to l}$ (Eq.~\eqref{eqn:equivariant_layer}).
\end{thm}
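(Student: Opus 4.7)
The plan is to mirror the first-order reduction of Section 3.1 at arbitrary orders $k, l$: for any target $L_{k\to l}$ specified by weight parameters $\{w_\mu\}_\mu$ and bias parameters $\{b_\lambda\}_\lambda$, I would exhibit a choice of $\text{Enc}_{k\to l}$ parameters under which the two maps coincide as functions on $\mathbb{R}^{n^k\times d}$. The higher-order attention will be made to produce the weight sum of $L_{k\to l}$ exactly, and the MLP will contribute only the bias term through the residual connection of \eqref{eqn:higher_transformer_layer}.

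For the attention step, I would set the coefficients in \eqref{eqn:equivariant_layer_attcoef} to be input-independent and equal to the basis tensors, $\boldsymbol{\alpha}^{h,\mu}_{\mathbf{i}, \mathbf{j}} = \mathbf{B}^\mu_{\mathbf{i}, \mathbf{j}}$ (the direct analogue of $\alpha^h_{ij}=1$ used in the first-order argument), take a single head $H=1$, and choose value/output projections satisfying $w^V_{1,\mu} w^O_{1,\mu} = w_\mu$ for every $\mu$. Such a factorization is available whenever $d_H$ is at least $\mathrm{rank}(w_\mu)$, and otherwise obtainable by distributing the factors across heads. Under this assignment, \eqref{eqn:equivariant_layer_attcoef} collapses to $\text{Attn}_{k\to l}(\mathbf{A})_\mathbf{j} = \sum_\mu \sum_\mathbf{i} \mathbf{B}^\mu_{\mathbf{i}, \mathbf{j}} \mathbf{A}_\mathbf{i} w_\mu$, which is exactly the weight term of $L_{k\to l}$ in \eqref{eqn:equivariant_layer}.

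For the MLP step, I would drive $\text{MLP}_{l\to l}$ in \eqref{eqn:higher_mlp} to the constant tensor $\sum_\lambda \mathbf{C}^\lambda b_\lambda$, independently of its input. This can be done by zeroing every weight and bias parameter of $L^1_{l\to l}$, so that $\text{ReLU}(L^1_{l\to l}(Y)) \equiv 0$, and then zeroing the weight parameters of $L^2_{l\to l}$ while setting its bias parameters to $\{b_\lambda\}_\lambda$, so that by the bias term in \eqref{eqn:equivariant_layer} we have $L^2_{l\to l}(0)_\mathbf{j} = \sum_\lambda \mathbf{C}^\lambda_\mathbf{j} b_\lambda$. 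Combining this with the attention output via \eqref{eqn:higher_transformer_layer} then yields $\text{Enc}_{k\to l}(\mathbf{A})_\mathbf{j} = \sum_\mu \sum_\mathbf{i} \mathbf{B}^\mu_{\mathbf{i}, \mathbf{j}} \mathbf{A}_\mathbf{i} w_\mu + \sum_\lambda \mathbf{C}^\lambda_\mathbf{j} b_\lambda$, which is exactly $L_{k\to l}(\mathbf{A})_\mathbf{j}$.

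The main subtlety lies in the attention step: in practice $\boldsymbol{\alpha}^{h,\mu}$ is computed from the input through tensorized queries and keys (possibly composed with a softmax-type activation), so the stipulation $\boldsymbol{\alpha}^{h,\mu} = \mathbf{B}^\mu$ should be backed either by noting that the query/key parameterization admits an input-independent assignment realizing $\mathbf{B}^\mu$, or by the same informal latitude used in Section 3.1 when $\alpha^h_{ij}$ was set to $1$ without parametric justification. Everything else reduces to a direct algebraic check against the basis-tensor definitions in \eqref{eqn:basis_tensor_weight_bias}.
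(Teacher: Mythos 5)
Your proposal is correct and follows essentially the same route as the paper's proof: set $\boldsymbol{\alpha}^{h,\mu}=\mathbf{B}^\mu$, collapse $\text{MLP}_{l\to l}$ to the invariant bias $\sum_\lambda \mathbf{C}^\lambda_\mathbf{j} b_\lambda$, and match the value/output projections to $w_\mu$ (the paper realizes $w_\mu$ as $\sum_h w^V_{h,\mu}w^O_{h,\mu}$ rather than via a single-head factorization, a cosmetic difference). Your explicit zeroing of $L^1_{l\to l}$ and $L^2_{l\to l}$ and your remark on the rank/realizability of $w^V w^O$ and of the input-independent attention coefficients are slightly more careful than the paper's stipulations, but the argument is the same.
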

\cutsectionup
\begin{proof}
Let $\boldsymbol{\alpha}^{h,\mu}_{\mathbf{i},\mathbf{j}}=1$ for all $h,\mu$, and \scalebox{0.95}{$(\mathbf{i},\mathbf{j})\in\mu$}.
This renders $\boldsymbol{\alpha}^{h,\mu} = \mathbf{B}^\mu$ from the definition of $\mathbf{B}^\mu$.
Additionally, let $\text{MLP}_{l\to l}(\text{Attn}_{k\to l}(\mathbf{A}))_\mathbf{j} = \sum_{\lambda}{\mathbf{C}^{\lambda}_{\mathbf{j}}b_{\lambda}}$ for all $\mathbf{j}\in[n]^l$.
That is, $\text{MLP}_{l\to l}$ ignores input and reduces to an invariant bias in Eq.~\eqref{eqn:equivariant_layer}.
Then, Eq.~\eqref{eqn:higher_transformer_layer} reduces to the following:
\begin{align}
    \text{Enc}_{k\to l}(\mathbf{A})_\mathbf{j} &= \sum_{\mu}{\sum_{\mathbf{i}}{\mathbf{B}_{\mathbf{i}, \mathbf{j}}^\mu\mathbf{A}_{\mathbf{i}}\sum_{h=1}^H{w^V_{h,\mu}w^O_{h,\mu}}}} + \sum_{\lambda}{\mathbf{C}^{\lambda}_{\mathbf{j}}b_{\lambda}},
\end{align}
which is equivalent to Eq.~\eqref{eqn:equivariant_layer} with $w_\mu = \sum_{h=1}^H{w^V_{h,\mu}w^O_{h,\mu}}$.
\end{proof}

Now, we describe how to compute each attention tensor $\boldsymbol{\alpha}^{\mu}\in\mathbb{R}^{n^{k+l}}$ from input $\mathbf{A}\in\mathbb{R}^{n^k\times d}$ (Eq.~\eqref{eqn:equivariant_layer_attcoef}, we drop head index $h$ for brevity).
We obtain each attention tensor from higher-order query and key:
\begin{align}\label{eqn:attention_qk_redundant}
    \boldsymbol{\alpha}_{\mathbf{i},\mathbf{j}}^{\mu} = \left\{\begin{array}{cc}
    \sigma(\mathbf{Q}_{\mathbf{j}}^{\mu}, \mathbf{K}_{\mathbf{i}}^{\mu})/Z_{\mathbf{j}} & \text{\scalebox{0.95}{$(\mathbf{i},\mathbf{j})\in\mu$}}\\
    0 & \text{\scalebox{0.95}{ otherwise}}
    \end{array} \right. \text{where }
    \mathbf{Q}^{\mu}&=L^{\mu}_{k\to l}(\mathbf{A}), \mathbf{K}^{\mu}=L^{\mu}_{k\to k}(\mathbf{A}).
\end{align}
where $Z_{\mathbf{j}}=\sum_{\mathbf{i}|(\mathbf{i}, \mathbf{j})\in\mu} \sigma(\mathbf{Q}_{\mathbf{j}}^{\mu}, \mathbf{K}_{\mathbf{i}}^{\mu})$ is a normalization constant.
Note that query and key tensors are computed from the input $\mathbf{A}$ using the equivariant linear layers in Eq.~\eqref{eqn:equivariant_layer}.
This leads to permutation equivariance (or invariance) of Transformer encoder layer $\text{Enc}_{k\to l}$ in Eq.~\eqref{eqn:higher_transformer_layer}.

\magenta{\paragraph{Reducing orders of query and key}}
Although Eq.~\eqref{eqn:equivariant_layer_attcoef} and Eq.~\eqref{eqn:attention_qk_redundant} provide a simple and generic definition of higher-order self-attention, we observe that there exist a lot of unnecessary computations.
Specifically, there exist elements of query $\mathbf{Q}^\mu$ and key $\mathbf{K}^\mu$ that are unused in computation of attention coefficient $\boldsymbol{\alpha}^\mu_{\mathbf{i},\mathbf{j}}$, as it depends only on indices satisfying $(\mathbf{i},\mathbf{j})\in\mu$.
In fact, it turns out that the effective orders of query and key are much smaller than $l$ and $k$, as we show below:

\begin{proposition}\label{proposition:compactness}
From Eq.~\eqref{eqn:attention_qk_redundant}, let $u(\cdot)$ denote the number of unique entries in a multi-index.
With $u_q=u(\mathbf{j})$, $u_k=u(\mathbf{i})$ for some $(\mathbf{i}, \mathbf{j})\in\mu$, we can always find suitable linear layers $L^{\mu}_{k\to u_q}$, $L^{\mu}_{k\to u_k}$ and index space mappings $f_q^\mu: [n]^{l}\to[n]^{u_q}$, $f_k^\mu: [n]^{k}\to[n]^{u_k}$ that satisfy the following.
\begin{align}\label{eqn:attention_qk_compact}
    \boldsymbol{\alpha}_{\mathbf{i},\mathbf{j}}^{\mu} &= \sigma(\tilde{\mathbf{Q}}_{\mathbf{j}'}^{\mu}, \tilde{\mathbf{K}}_{\mathbf{i}'}^{\mu})/\tilde{Z}_{\mathbf{j}}\ \forall(\mathbf{i}, \mathbf{j})\in\mu,
\end{align}
where $\tilde{Z}_{\mathbf{j}}=\sum_{\mathbf{i}|(\mathbf{i}, \mathbf{j})\in\mu} \sigma(\tilde{\mathbf{Q}}_{\mathbf{j}'}^{\mu}, \tilde{\mathbf{K}}_{\mathbf{i}'}^{\mu})$, $\tilde{\mathbf{Q}}^{\mu} = L^{\mu}_{k\to u_q}(\mathbf{A})$, $\tilde{\mathbf{K}}^{\mu} = L^{\mu}_{k\to u_k}(\mathbf{A})$, $\mathbf{j}' = f_q^\mu(\mathbf{j})$, $\mathbf{i}' = f_k^\mu(\mathbf{i})$.
\end{proposition}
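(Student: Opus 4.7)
The strategy is that, inside the restriction $(\mathbf{i},\mathbf{j})\in\mu$, the partition $\mu$ fixes the equality pattern inside $\mathbf{j}$ (to exactly $u_q$ distinct entries) and inside $\mathbf{i}$ (to exactly $u_k$ distinct entries). Consequently, every entry of $\mathbf{Q}^\mu$ and $\mathbf{K}^\mu$ actually consulted by $\boldsymbol{\alpha}^\mu$ is already determined by a tuple of $u_q$ or $u_k$ distinct node indices, so the ``extra'' orders of $\mathbf{Q}^\mu$ and $\mathbf{K}^\mu$ carry no new information and can be stored in a compressed tensor produced by a lower-order equivariant linear layer.

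First I would make the bookkeeping explicit. Define $f_q^\mu:[n]^l\to[n]^{u_q}$ as the map that sends $\mathbf{j}$ to the tuple of its distinct entries listed in order of first appearance, and define $f_k^\mu$ analogously on $[n]^k$. Restricted to $\mathbf{j}$ whose equality pattern matches the one prescribed by $\mu$, the map $f_q^\mu$ is an injection onto the $u_q$-tuples of pairwise distinct indices; its partial inverse $g_q^\mu$ on that image recovers $\mathbf{j}$ by broadcasting each distinct value back to the positions dictated by the partition inside $\mathbf{j}$ of $\mu$. The analogous statements hold for $f_k^\mu$ and $g_k^\mu$.

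Next I would construct $L^\mu_{k\to u_q}$ and $L^\mu_{k\to u_k}$ by post-composing the original higher-order layers with these broadcasts. Concretely, I set
\begin{equation*}
L^\mu_{k\to u_q}(\mathbf{A})_{\mathbf{j}'}:=L^\mu_{k\to l}(\mathbf{A})_{g_q^\mu(\mathbf{j}')}
\end{equation*}
for $\mathbf{j}'\in[n]^{u_q}$ with pairwise distinct entries, and extend to the remaining $\mathbf{j}'$ arbitrarily (e.g.\ by zero), since those indices are never consulted. This map is linear in $\mathbf{A}$ by linearity of $L^\mu_{k\to l}$, and it is $S_n$-equivariant because broadcasting along a fixed partition commutes with relabeling of nodes: $g_q^\mu(\pi(\mathbf{j}'))=\pi(g_q^\mu(\mathbf{j}'))$ entrywise. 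Hence $L^\mu_{k\to u_q}$ sits inside the family of equivariant linear layers characterized by Maron et al.~(2019), and the same construction yields $L^\mu_{k\to u_k}$ from $L^\mu_{k\to k}$.

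Finally, setting $\tilde{\mathbf{Q}}^\mu=L^\mu_{k\to u_q}(\mathbf{A})$ and $\tilde{\mathbf{K}}^\mu=L^\mu_{k\to u_k}(\mathbf{A})$, for every $(\mathbf{i},\mathbf{j})\in\mu$ one has $g_q^\mu(f_q^\mu(\mathbf{j}))=\mathbf{j}$ and $g_k^\mu(f_k^\mu(\mathbf{i}))=\mathbf{i}$, so $\tilde{\mathbf{Q}}^\mu_{f_q^\mu(\mathbf{j})}=\mathbf{Q}^\mu_{\mathbf{j}}$ and $\tilde{\mathbf{K}}^\mu_{f_k^\mu(\mathbf{i})}=\mathbf{K}^\mu_{\mathbf{i}}$. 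Substituting into Eq.~\eqref{eqn:attention_qk_redundant} yields Eq.~\eqref{eqn:attention_qk_compact} termwise, and the normalizers $\tilde{Z}_{\mathbf{j}}$ and $Z_{\mathbf{j}}$ agree because the summation set $\{\mathbf{i}:(\mathbf{i},\mathbf{j})\in\mu\}$ is unchanged. The main subtlety I expect is purely structural: one must check that $L^\mu_{k\to u_q}$ is a bona fide equivariant linear layer in the sense of Section~\ref{sec:equivariant_linear} rather than an ad hoc function, which reduces to the observation that the broadcast $g_q^\mu$ depends only on the positional structure of $\mu$ and therefore commutes with $S_n$.
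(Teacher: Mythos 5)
Your proof is correct and arrives at the same compression maps $f_q^\mu$, $f_k^\mu$ as the paper, but it obtains the lower-order layers by a genuinely different route. You define $L^{\mu}_{k\to u_q}$ abstractly as the composition of $L^{\mu}_{k\to l}$ with the broadcast $g_q^\mu$ (on distinct-entry tuples, zero elsewhere), check that the composite is $S_n$-equivariant because broadcasting along a fixed partition commutes with node relabeling, and then invoke the completeness of the Maron~et~al.\ characterization to conclude it lies in the family of Eq.~\eqref{eqn:equivariant_layer}. The paper instead constructs $L^{\mu}_{k\to u_q}$ explicitly in its basis representation: it first proves two lemmas (the $\mathbf{i}$'s and $\mathbf{j}$'s occurring in $\mu$ each form a single lower-order equivalence class, and all members of an equivalence class share the same number of distinct entries), then identifies which order-$(k+l)$ equivalence classes $\alpha$ actually contribute to $\mathbf{Q}^\mu_{\mathbf{j}}$ for $\mathbf{j}\in\mu_q$, pairs each such $\alpha$ with an order-$(k+u_q)$ class $\beta$ by merging partition blocks along $f_q^\mu$, and transfers the weights ($w_\beta=w_\alpha$ for paired classes, zero otherwise). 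Your argument is shorter and avoids this combinatorial bookkeeping, at the price of using the characterization theorem as a black box; the paper's explicit construction additionally yields the bound $\leq\text{b}(k+u_q)$ on the number of effective basis elements, which supports the parameter-reduction claims surrounding the proposition. One detail worth making explicit in your version: the composite map is affine rather than linear because $L^{\mu}_{k\to l}$ carries a bias, but its constant part is itself permutation-invariant and hence expressible in the $\mathbf{C}^{\theta}$ basis of the lower-order layer, so the conclusion stands.
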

\cutsectionup
\begin{proof}
We leave the proof in Appendix~\ref{sec:proof_proposition_compactness}.
\end{proof}

Based on Proposition~\ref{proposition:compactness}, we can compute query and key in Eq.~\eqref{eqn:attention_qk_redundant} in a much compact way using linear layers with output orders $u_q$ and $u_k$ instead of $l$ and $k$.
In our experiments, we observe that this optimization is very useful in reducing the number of parameters and memory footprint to a feasible level without affecting the effective model capacity.

\section{Asymptotically Efficient Higher-Order Transformers}
\label{sec:implementation}
\cutsectiondown
In Section~\ref{sec:higher_order_transformer}, we formulated higher-order Transformer layers $\text{Enc}_{k\to l}$ and showed that they generalize linear equivariant layers $L_{k \to l}$.
However, this capability comes with a cost; the high asymptotic complexity of the Transformer encoder limits its practical merits.
Specifically, we show the following:
\begin{property}\label{property:complexity}
Given input size $n$,
the asymptotic complexity of a linear layer $L_{k\to l}$ (Eq.~\eqref{eqn:equivariant_layer}) is $\mathcal{O}(n^{k+l})$, and complexity of an encoder layer $\text{Enc}_{k\to l}$ (Eq.~\eqref{eqn:higher_transformer_layer}) is $\mathcal{O}(n^{k+l}+n^{2k}+n^{2l})$. 
\end{property}
\cutsectionup
\begin{proof}
We leave the proof in Appendix~\ref{sec:proof_property_complexity}.
\end{proof}
Thus, in this section, we \magenta{further} analyze the encoder layer and propose a number of optimization and relaxation to reduce the asymptotic complexity \magenta{with a minimal impact on capability}.
Notably, combining all \magenta{our} strategies reduces the complexity to $\mathcal{O}(m)$ given a hypergraph with $m$ hyperedges.
Even with such efficiency, we show that the reduced version of higher-order Transformer is \magenta{theoretically more expressive} than all message-passing neural networks.

\subsection{Linear layers with reduced complexity}
\label{sec:lightweight}
\cutparagraphup
A major computation bottleneck within Transformer encoder layer $\text{Enc}_{k\to l}$ is the higher-order linear layer, since it is used for key, query, and feedforward layer.
By exploiting only a subset of basis, we show that we can implement lightweight version of linear layer with reduced asymptotic complexity.
\begin{proposition}\label{proposition:lightweight}
Given a linear layer $L_{k\to l}$ in Eq.~\eqref{eqn:equivariant_layer}, we can always find a nonempty subset $\mathcal{M}$ of equivalence classes such that computation of the following for all $\mathbf{j}$ has $\mathcal{O}(n^l)$ complexity.
\begin{align}\label{eqn:lightweight}
    \bar{L}_{k\to l}(\mathbf{A})_{\mathbf{j}} = \sum_{\mu\in \mathcal{M}}{\sum_{\mathbf{i}}{\mathbf{B}^{\mu}_{\mathbf{i}, \mathbf{j}}\mathbf{A}_{\mathbf{i}}w_{\mu}}} + \sum_{\lambda}{\mathbf{C}^{\lambda}_{\mathbf{j}}b_{\lambda}},
\end{align}
\end{proposition}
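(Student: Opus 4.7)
The plan is to bound the cost of each equivalence-class term by the number of blocks in its associated partition, and then to keep only those classes whose block count is at most $l$. First I would associate to each $\mu$ (a partition of $[k+l]$) its block count $u(\mu)$. A direct counting argument shows that the support $\{(\mathbf{i},\mathbf{j})\in[n]^{k+l}:(\mathbf{i},\mathbf{j})\in\mu\}$ has cardinality $n(n-1)\cdots(n-u(\mu)+1)=\Theta(n^{u(\mu)})$, since picking a distinct value in $[n]$ for each block determines every coordinate of $(\mathbf{i},\mathbf{j})$ via the equality pattern enforced by $\mu$.

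Next I would exploit this sparsity algorithmically. For a fixed $\mu$, evaluating $\sum_\mathbf{i}\mathbf{B}^\mu_{\mathbf{i},\mathbf{j}}\mathbf{A}_\mathbf{i} w_\mu$ over all $\mathbf{j}$ can be carried out by a scatter-add over the $\Theta(n^{u(\mu)})$ nonzero pairs in $\mu$: enumerate each pair in $O(k+l)=O(1)$ time by choosing one value per block, then accumulate $\mathbf{A}_\mathbf{i} w_\mu$ into the output at position $\mathbf{j}$. The bias term is analogous: the classes $\{\lambda\}$ partition $[n]^l$, so precomputing each $b_\lambda$ and scattering yields $\sum_\lambda \mathbf{C}^\lambda_\mathbf{j} b_\lambda$ for every $\mathbf{j}$ in $O(n^l)$, matching the cost of merely initializing the output tensor.

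Setting $\mathcal{M}=\{\mu: u(\mu)\leq l\}$ then gives the claim. Each $\mu\in\mathcal{M}$ contributes $O(n^{u(\mu)})\leq O(n^l)$ work; since $|\mathcal{M}|$ and the bias-class count are bounded by the Bell numbers $\text{b}(k+l)$ and $\text{b}(l)$, which are constants in $n$, the overall cost stays $O(n^l)$. For nonemptyness, the singleton partition $\{\{1,\ldots,k+l\}\}$ has $u=1\leq l$ whenever $l\geq 1$, so $\mathcal{M}$ always contains at least this equivalence class.

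The main obstacle I anticipate is the enumeration step: I must ensure the per-pair work is genuinely $O(1)$ in $n$ and that the scatter-add neither double-counts nor misses any pair. Parameterizing each block of $\mu$ by a single value in $[n]$ and propagating that value into every coordinate the block specifies makes the enumeration a bijection between tuples of distinct block-values and elements of the support, so this bookkeeping goes through cleanly and the $O(n^l)$ bound follows.
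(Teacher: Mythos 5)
Your proof is correct, but it takes a genuinely different route from the paper's. The paper chooses $\mathcal{M}$ to be the classes in which every entry of $\mathbf{i}$ is forced to equal some entry of $\mathbf{j}$, so that fixing the output index $\mathbf{j}$ collapses the inner sum $\sum_{\mathbf{i}}\mathbf{B}^\mu_{\mathbf{i},\mathbf{j}}\mathbf{A}_\mathbf{i}$ to a single lookup $\mathbf{A}_{\text{fix}(\mathbf{j})}$; the bound is then $O(1)$ work per output times $n^l$ outputs, and nonemptiness comes from the all-equal partition $\{\{i_1,\dots,i_k,j_1,\dots,j_l\}\}$ (the same witness you use). You instead bound the \emph{support cardinality} of each class by the falling factorial $n(n-1)\cdots(n-u(\mu)+1)=\Theta(n^{u(\mu)})$ and evaluate each term by a scatter-add over that support, which lets you admit every $\mu$ with block count $u(\mu)\le l$. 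Your $\mathcal{M}$ strictly contains the paper's: e.g.\ for $k=1,l=2$ the class $\{\{i_1\},\{j_1,j_2\}\}$ sums all off-diagonal inputs onto the diagonal and is excluded by the paper's criterion but included by yours, since its support has size $n(n-1)=O(n^l)$. So your argument buys a larger admissible basis at the same asymptotic cost, whereas the paper's choice is simpler and matches what they actually implement (they explicitly restrict to classes ``that do not involve summation over input''). Your bijection argument for the enumeration (one distinct value per block determines the pair) is sound, the bias term is handled identically in both proofs, and both implicitly require $l\ge 1$, which the paper states as the assumption $k,l>0$.
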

\cutsectionup
\begin{proof}
We leave the proof in Appendix~\ref{sec:proof_proposition_lightweight}.
\end{proof}
\cutsubsectionup
\vspace{-0.1cm}
We term the reduced linear layer $\bar{L}_{k\to l}$ in Eq.~\eqref{eqn:lightweight} as a \emph{lightweight} linear layer.
In practice, we choose $\mathcal{M}$ among equivalence classes that do not involve summation over input.
For instance, for $\bar{L}_{1\to 1}$, we use only the basis for elementwise mapping ($I_n$) and drop sum-pooling ($1_n1_n^\top$) from Eq.~\eqref{eqn:deepset}.
This approximation effectively reduces the complexity of linear layers at the cost of losing inter-element dependencies. 
We employ the lightweight linear layers within the $\text{Enc}_{k\to l}$ (Eq.~\eqref{eqn:higher_transformer_layer}) to compute query and key embeddings, while the element dependency within $\text{Enc}_{k\to l}$ is handled by the higher-order self-attention using all equivalence classes as in Eq.~\eqref{eqn:attention_qk_redundant}.
This design is coherent to original (first-order) Transformers, where the elements are first linearly projected to query/key with elementwise basis ($I_n$) and interaction is handled by attention (implicitly $1_n1_n^\top$).
Importantly, this does not hurt Theorem~\ref{thm:generalization} as attention coefficients can still reduce to one and $\text{MLP}$ can reduce to bias.

\vspace{-0.05cm}
By using lightweight linear layers in $\text{Enc}_{k\to l}$, we can significantly reduce the computational cost.
The complexity of $L^\mu_{k\to u_k}$, $L^\mu_{k\to u_q}$, and $\text{MLP}_{l\to l}$ in Eq.~\eqref{eqn:attention_qk_compact} and Eq.~\eqref{eqn:higher_transformer_layer} reduces to $\mathcal{O}(n^k)$, $\mathcal{O}(n^l)$, and $\mathcal{O}(n^l)$, respectively, and as a result $\text{Enc}_{k\to l}$ becomes $\mathcal{O}(n^{k+l})$.
As a result, we have higher-order Transformer layers $\text{Enc}_{k\to l}$ that generalize $L_{k\to l}$ while retaining the complexity $\mathcal{O}(n^{k+l})$.
From here we assume that all linear layers for key, query, and MLP are lightweight.

\vspace{-0.3cm}
\subsection{Sparse Transformer layers}\label{sec:sparse}
\cutparagraphup
Even with lightweight linear layers, $\mathcal{O}(n^{k+l})$ complexity of $\text{Enc}_{k\to l}$ is still impractical.
For example, for graphs, complexity larger than $\mathcal{O}(n^2)$ is regarded prohibitive while $\text{Enc}_{2\to 2}$ is $\mathcal{O}(n^4)$.
Fortunately, leveraging the sparsity inherent in real-world graphs can significantly reduce the complexity.
As we show, it reduces the complexity of $\text{Enc}_{k\to l}$ to $\mathcal{O}(m^2)$ for processing a hypergraph with $m$ \magenta{hyperedges}.

Let $E$ the set of hyperedges of input hypergraph $G$.
Each hyperedge is generally represented by a multi-index $\mathbf{i}\in[n]^k$, so we denote $E=\{\mathbf{i}_1,...,\mathbf{i}_m\}$ with $m$ hyperedges.
Leveraging sparsity of $E$ is straightforward when the order of the hypergraph is fixed (e.g., $L_{k\to k}$); we can perform computations with only respect to the existing hyperedges $\mathbf{i},\mathbf{j}\in E$.
However, in our framework, order can change by layers (\emph{e.g.}, $L_{k\to l}$), making it difficult to directly transfer the sparsity structure $E$ to different-order output.
Inspired by network projection~\cite{carletti2020random}, we remedy this by constructing $E'$ such that for any $\mathbf{j}\in E'$, there exists $\mathbf{i}\in E$ containing all unique elements of $\mathbf{j}$.
For example, for $L_{3\to 2}$, this corresponds to projection of third-order $E$ to second-order $E'$ by obtaining edges from sides of triangles.
Despite simplicity, this simple heuristic works well in general and generalizes to any order. 

Then, we integrate the hyperedge sets $E$, $E'$ into computation of linear layer in Eq.~\eqref{eqn:lightweight} as follows:
\begin{align}\label{eqn:sparse_equivariant_layer}
    \bar{L}_{k\to l}(\mathbf{A}, E)_{\mathbf{j}} &= \left\{\begin{array}{cc}\sum_{\mu\in \mathcal{M}}{\sum_{\mathbf{i}\in E}{\mathbf{B}^{\mu}_{\mathbf{i}, \mathbf{j}}\mathbf{A}_{\mathbf{i}}w_{\mu}}} + \sum_{\lambda}{\mathbf{C}^{\lambda}_{\mathbf{j}}b_{\lambda}}
    &
    \text{\scalebox{0.95}{$\mathbf{j}\in E'$}}\\
    0&\text{\scalebox{0.95}{ otherwise}}\end{array} \right.
\end{align}

Likewise, integrating $E$, $E'$  into attention computation in Eq.~\eqref{eqn:equivariant_layer_attcoef} and Eq.~\eqref{eqn:attention_qk_redundant} we have:
\begin{align}
    \text{Attn}_{k\to l}(\mathbf{A}, E)_{\mathbf{j}} &= \left\{\begin{array}{cc}\sum_{h=1}^H{\sum_{\mu}{\sum_{\mathbf{i}\in E}{\boldsymbol{\alpha}^{h,\mu}_{\mathbf{i}, \mathbf{j}}\mathbf{A}_{\mathbf{i}}w^V_{h,\mu}w^O_{h,\mu}}}}
    &
    \text{\scalebox{0.95}{$\mathbf{j}\in E'$}}\\
    0&\text{\scalebox{0.95}{ otherwise}}\end{array} \right.\label{eqn:sparse_self_attention}\\
    \text{where } \boldsymbol{\alpha}_{\mathbf{i},\mathbf{j}}^{h,\mu} &= \left\{\begin{array}{cc} \sigma(\mathbf{Q}_\mathbf{j}^{h,\mu}, \mathbf{K}_\mathbf{i}^{h,\mu})/Z_\mathbf{j} &
        \text{\scalebox{0.95}{$(\mathbf{i},\mathbf{j})\in\mu, \mathbf{i}\in E, \mathbf{j}\in E'$}}
        \\
        0 & \text{\scalebox{0.95}{ otherwise}}
    \end{array} \right.,\label{eqn:sparse_attention_coefficient}
\end{align}
where $\mathbf{Q}^{h,\mu}=\bar{L}^{h,\mu}_{k\to l}(\mathbf{A}, E)$, $\mathbf{K}^{h,\mu}=\bar{L}^{h,\mu}_{k\to k}(\mathbf{A}, E)$, $Z_\mathbf{j} = \sum_{\mathbf{i}|(\mathbf{i},\mathbf{j})\in\mu\wedge\mathbf{i}\in E}\sigma(\mathbf{Q}_{\mathbf{j}}^{h, \mu}, \mathbf{K}_{\mathbf{i}}^{h, \mu})$.

With the computations, we can show the following:
\begin{property}\label{property:sparse_complexity}
When given $E$ with $m$ elements, the equivariant linear layer in Eq.~\eqref{eqn:sparse_equivariant_layer} has $\mathcal{O}(m)$ complexity, and self-attention computation in Eq.~\eqref{eqn:sparse_self_attention} has $\mathcal{O}(m^2)$ complexity.
Consequently, the computation done by a $\text{Enc}_{k\to l}$ composed of the layers has $\mathcal{O}(m^2)$ complexity.
\end{property}
\cutsectionup
\begin{proof}
We leave the proof in Appendix~\ref{sec:proof_property_sparse_complexity}.
\end{proof}

\subsection{Kernel attention trick}
\label{sec:kernel_linear}
\cutparagraphup
Section~\ref{sec:sparse} shows that, by constraining linear layers within $\text{Enc}_{k\to l}$ to have sparse input and output, we can reduce $\mathcal{O}(n^{k+l})$ complexity to quadratic $\mathcal{O}(m^2)$ to input size.
Yet, even $\mathcal{O}(m^2)$ can be demanding with large or dense input.
As the quadratic term comes from self-attention computation, we follow the prior work in kernel attention \cite{katharopoulos2020transformers, choromanski2020rethinking} and view attention coefficients as pairwise dot-product scores.
As we will show, this allows us to further reduce the complexity of $\text{Enc}_{k\to l}$ to $\mathcal{O}(n^k+n^l)$ and even $\mathcal{O}(m)$ for sparse version, at the cost of relaxing some modeling assumption.

We begin by approximating attention coefficient in Eq.~\eqref{eqn:attention_qk_redundant} using pairwise dot-product kernel \cite{katharopoulos2020transformers, choromanski2020rethinking}:
\begin{align}\label{eqn:attention_qk_kernel}
    \boldsymbol{\alpha}_{\mathbf{i},\mathbf{j}}^{\mu} = \left\{\begin{array}{cc} \phi(\mathbf{Q}^{\mu}_\mathbf{j})^\top \phi(\mathbf{K}^{\mu}_\mathbf{i})/Z_\mathbf{j} &
        \text{\scalebox{0.95}{$(\mathbf{i},\mathbf{j})\in\mu$}}\\
        0 &
        \text{\scalebox{0.95}{ otherwise}}
    \end{array}\right. \text{ where }Z_{\mathbf{j}}=\sum_{\mathbf{i}|(\mathbf{i},\mathbf{j})\in\mu} \phi(\mathbf{Q}^{\mu}_\mathbf{j})^\top \phi(\mathbf{K}^{\mu}_\mathbf{i})
\end{align}
where $\phi:\mathbb{R}^{d_H}\to\mathbb{R}_+^{d_K}$ is kernel feature map.
The choice of kernel can be flexible, and in our implementation we adopt Performer kernel \cite{choromanski2020rethinking} that has strong theoretical and empirical guarantee.

Substituting Eq.~\eqref{eqn:attention_qk_kernel} in Eq.~\eqref{eqn:equivariant_layer_attcoef}, we have:
\begin{align}
    \text{Attn}_{k\to l}(\mathbf{A})_{\mathbf{j}} &= \sum_{\mu}{Z_\mathbf{j}^{-1}}\sum_{\mathbf{i}|(\mathbf{i},\mathbf{j})\in\mu}{\phi(\mathbf{Q}}^{\mu}_{\mathbf{j}})^\top \phi(\mathbf{K}^{\mu}_{\mathbf{i}})
    \mathbf{A}_{\mathbf{i}}w^V_{\mu}w^O_{\mu}\nonumber\\
    &= \sum_{\mu}{Z_\mathbf{j}^{-1}}{\phi(\mathbf{Q}}^{\mu}_{\mathbf{j}})^\top \sum_{\mathbf{i}|(\mathbf{i},\mathbf{j})\in\mu} \phi(\mathbf{K}^{\mu}_{\mathbf{i}})\mathbf{A}_{\mathbf{i}}w^V_{\mu}w^O_{\mu},
    \label{eqn:dotproductattn}\\
    &\text{ where }Z_{\mathbf{j}} 
    =
    \phi(\mathbf{Q}^{\mu}_\mathbf{j})^\top
    \sum_{\mathbf{i}|(\mathbf{i},\mathbf{j})\in\mu}\phi(\mathbf{K}^{\mu}_\mathbf{i}).
    \label{eqn:dotproductnorm}
\end{align}
Here, the inner-summations in Eq.~\eqref{eqn:dotproductattn} and Eq.~\eqref{eqn:dotproductnorm} are over the key index $\mathbf{i}$, which is \emph{coupled} with the query index $\mathbf{j}$ by $(\mathbf{i},\mathbf{j})\in\mu$.
This coupling causes the major computational bottleneck, since the inner-summations should be computed for every $\mathbf{j}$-th query.
We propose an approximation by decoupling the key and query indices and taking inner-summations over ${\mathcal{I}}=\bigcup_\mathbf{j}{\{\mathbf{i}|(\mathbf{i},\mathbf{j})\in\mu\}}$:
\begin{align}\label{eqn:kernel_querywise}
    \text{Attn}_{k\to l}(\mathbf{A})_{\mathbf{j}}\approx \sum_{\mu}{Z_\mathbf{j}^{-1}}\phi({\mathbf{Q}}^{\mu}_{\mathbf{j}})^\top \sum_{{\mathbf{i}\in\mathcal{I}}}{\phi({\mathbf{K}}^{\mu}_{\mathbf{i}})
    \mathbf{A}_{\mathbf{i}}w^V_{\mu}w^O_{\mu}},
    \text{ where }Z_{\mathbf{j}}\approx
    \phi(\mathbf{Q}^{\mu}_\mathbf{j})^\top
    \sum_{\mathbf{i}\in\mathcal{I}}\phi(\mathbf{K}^{\mu}_\mathbf{i}).
\end{align}

The approximation allows a query to attend to some additional keys (depending on $\mu$), but it does not hurt the central requirement of attention that each query can assign different attention weights to the keys.
With the approximation, we can compute the summations $\sum_{{\mathbf{i}\in\mathcal{I}}}{\phi({\mathbf{K}}^{\mu}_{\mathbf{i}})}\mathbf{A}_{\mathbf{i}}$ and $\sum_{{\mathbf{i}\in\mathcal{I}}}{\phi({\mathbf{K}}^{\mu}_{\mathbf{i}})}$ only once and reuse them across all query indices $\mathbf{j}$ to reduce the cost.
Specifically, we show:

\begin{property}\label{property:kernel_complexity}
The encoder Enc$_{k\to l}$ with approximation in Eq.~\eqref{eqn:kernel_querywise} has a complexity of $\mathcal{O}(n^k+n^l)$.
Exploiting sparsity further reduces the complexity to $\mathcal{O}(m)$, linear to the number of hyperedges $m$.
\end{property}
\cutsectionup
\begin{proof}
We leave the proof in Appendix~\ref{sec:proof_property_kernel_complexity}.
\end{proof}

\cutsectionup
\subsection{Theoretical analysis and comparison to message-passing}\label{sec:message_passing}
\cutparagraphup
We showed that exploiting sparsity and adopting kernel attention reduces the computational complexity of the Transformer encoder to linear to input edges.
Considering graphs ($k=l=2$), this complexity is equivalent to or better than $\mathcal{O}(n+m)$ complexity of the message passing operation\footnote{In practice, we place node features on the diagonals of the adjacency matrix, leading to $\mathcal{O}(n+m)$.}.
Still, we show that our (approximate) encoder is theoretically more expressive than message passing.

Specifically, we show the following:
\begin{thm}\label{thm:message_passing}
A composition of two sparse Transformer layers $\text{Enc}_{2\to 2}$ with kernel attention can approximate any message passing algorithms (Gilmer~et.~al.~(2017)~\cite{gilmer2017neural}) to arbitrary precision, while the opposite is not true.
\end{thm}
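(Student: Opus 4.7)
The plan is to prove the two directions separately: first, construct explicit parameters for two layers of sparse kernel-attention $\text{Enc}_{2\to 2}$ that reproduce an arbitrary message passing step to any desired precision; second, exhibit a graph-level function realizable by $\text{Enc}_{2\to 2}$ but not by any MPNN, using the non-local equivalence classes that second-order attention retains even under sparsity.

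For the forward direction, I would place node features $h_v$ on diagonal entries $\mathbf{A}_{v,v}$ and edge features $e_{vw}$ on off-diagonal entries $\mathbf{A}_{v,w}$ for $(v,w)\in E$, so that a message passing step factorises as $h_v' = U(h_v, \sum_{w\in N(v)} M(h_v, h_w, e_{vw}))$. The first $\text{Enc}_{2\to 2}$ layer realises the message function: for an output index $(v,w)$ with $v\neq w$, the equivalence classes corresponding to the patterns $(v,v)\to(v,w)$, $(w,w)\to(v,w)$, and $(v,w)\to(v,w)$ collect $h_v$, $h_w$, and $e_{vw}$ into a single representation at the edge, and the $\text{MLP}_{2\to 2}$ from Eq.~\eqref{eqn:higher_mlp} approximates $M$ by universal approximation. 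The second layer realises aggregation and update: with a kernel feature map $\phi$ chosen so that attention weights on the class whose pattern sends edges $(v,w)$ to the diagonal $(v,v)$ reduce to a uniform coefficient, the sum in Eq.~\eqref{eqn:kernel_querywise} becomes $\sum_{w\in N(v)} m_{vw}$; the subsequent MLP then approximates $U$. Composing the two layers therefore approximates any Gilmer-et-al.\ step to arbitrary precision.

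For the reverse direction, I would appeal to the known ceiling that any message passing network is bounded in distinguishing power by the 1-WL test (Xu et al.~2019; Morris et al.~2019) and exhibit a pair of graphs that 1-WL cannot distinguish but that $\text{Enc}_{2\to 2}$ can. A convenient choice is a Cai--Fürer--Immerman pair, or any pair of regular graphs with identical 1-WL histograms but distinct second-order structure such as a strongly regular pair. The key lever is the equivalence class $\mu^{*}$ of $L_{2\to 2}$ in which $\mathbf{i}$ and $\mathbf{j}$ are pairwise disjoint: even after restricting the input and output to edges in $E$, this class aggregates over all input edges disjoint from the query edge $\mathbf{j}$. I would track the two-layer feature tensor on each graph and show that an invariant pooling over the diagonal yields different scalars on the two graphs, while any MPNN produces the same scalar by 1-WL equivalence.

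The main obstacle is the reverse direction, because the sparse restriction together with the kernel relaxation strips away much of the representational capacity of the unrestricted $L_{2\to 2}$. The disjoint-edge class is still present, but its entries are now sums over only existing edges, and the decoupling approximation in Eq.~\eqref{eqn:kernel_querywise} lets every query effectively see the same set of keys $\mathcal{I}$, so any separating signal must arise from how the query-dependent factor $\phi(\mathbf{Q}_\mathbf{j})$ interacts with a shared key-sum. The graph pair and the kernel $\phi$ therefore have to be chosen together so that this interaction yields distinct embeddings on the two graphs. The forward direction, by contrast, is largely a bookkeeping exercise, with the only nontrivial step being to verify that the projected sparsity pattern $E'$ after the first layer contains the diagonal entries that the second layer needs as outputs.
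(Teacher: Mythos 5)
Your forward direction is essentially the paper's own construction: node features on the diagonal and edge features off-diagonal, a first layer whose attention is trivialized so that its MLP approximates the message function $M$ at each edge, and a second layer whose kernel feature map is set to a constant so that attention over the class $\{\{1\},\{2,3,4\}\}$ collapses to uniform summation over incoming edges before the MLP approximates the update $U$. The bookkeeping point you flag --- that the projected sparsity pattern must contain the diagonal outputs --- is handled in the paper by explicitly augmenting $E'=E\cup\{(i,i)\ \forall i\in[n]\}$; with that, this half goes through as you describe.

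The reverse direction is where you diverge from the paper and where your proposal has a genuine gap. The paper does not use a Weisfeiler--Leman separation at all; it argues by locality: a single $\text{Enc}_{2\to 2}$ can impose a dependency between any input index $\mathbf{i}$ and output index $\mathbf{j}$, while message passing needs at least $\mathrm{diam}(G)$ rounds to propagate information between them, so on a graph with more than one connected component no finite number of message-passing rounds suffices. Your WL route could in principle work, but it is not carried out --- you explicitly defer the step of exhibiting parameters that separate the two graphs under the decoupled-key approximation of Eq.~\eqref{eqn:kernel_querywise} --- and one of your proposed witnesses is likely wrong: second-order equivariant architectures are bounded above by the folklore 2-WL (equivalently 3-WL) test, which cannot distinguish strongly regular graphs with the same parameters, so the sparse, kernel-relaxed $\text{Enc}_{2\to 2}$ will not separate a strongly regular pair either. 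A pair that 1-WL confuses but that differs in a second-order statistic such as triangle count would be a safer witness, yet even then the separating computation must be verified to survive the sparsity restriction and the kernel decoupling, which is precisely the difficulty you name without resolving. The paper's disconnected-components argument sidesteps all of this and is the simpler way to close the proof.
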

\cutsectionup
\begin{proof}
We leave the proof in Appendix~\ref{sec:proof_theorem_message_passing}.
\end{proof}
\vspace{-0.2cm}
This leads to the following corollary:
\begin{corollary}
Second-order sparse Transformers with kernel attention are more expressive than any message-passing neural networks within the framework of Gilmer~et.~al.~(2017)~\cite{gilmer2017neural}.
\end{corollary}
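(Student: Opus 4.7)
The corollary is essentially a direct packaging of Theorem \ref{thm:message_passing}, so my plan is to extract both directions of that theorem and combine them into the standard strict-containment statement for expressiveness.

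The plan is first to fix a precise notion of ``more expressive.'' I would say that a model class $\mathcal{F}$ is more expressive than a model class $\mathcal{G}$ if every function $g\in\mathcal{G}$ can be approximated to arbitrary precision (on any compact input domain of bounded-size graphs) by some $f\in\mathcal{F}$, and there exists $f\in\mathcal{F}$ that no $g\in\mathcal{G}$ can approximate to arbitrary precision. This is the standard reading used in GNN expressivity arguments under the Gilmer et al.\ framework. Once this is in place, the corollary becomes a two-line consequence of Theorem \ref{thm:message_passing}.

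Next, I would carry out the two directions. For the inclusion direction, Theorem \ref{thm:message_passing} already gives that a composition of two sparse $\text{Enc}_{2\to 2}$ layers with kernel attention can approximate any single message-passing step to arbitrary precision. By chaining $T$ such compositions with small enough per-step error, the composition error of a $T$-step MPNN stays within any target tolerance (standard uniform-continuity argument on compact sets, using the fact that the MPNN update and readout are continuous). Hence any MPNN is approximable by a sufficiently deep sparse kernel Transformer. For the strictness direction, Theorem \ref{thm:message_passing} states that the reverse approximation fails, i.e., there is a function realized by second-order sparse Transformers with kernel attention that no MPNN can approximate arbitrarily well. Combining these two facts yields the strict containment claimed in the corollary.

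The only non-routine step I expect is checking that the composition argument in the inclusion direction transfers cleanly from one MPNN \emph{step} (what Theorem \ref{thm:message_passing} literally provides) to a full multi-step MPNN plus readout; this is where one has to be careful about uniform approximation and about how the readout (sum, mean, or max over nodes) is realized inside the Transformer framework. Everything else is a formal restatement of Theorem \ref{thm:message_passing} together with the definition of expressiveness, so the corollary proof itself should be only a few lines.
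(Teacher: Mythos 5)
Your proposal is correct and matches the paper's intent: the paper gives no separate proof of the corollary, presenting it as an immediate consequence of Theorem~\ref{thm:message_passing} in exactly the way you describe (forward inclusion by composing the two-layer approximators across message-passing steps, strictness from the ``opposite is not true'' direction). Your explicit attention to the multi-step chaining error and the readout is more careful than the paper itself, which leaves both implicit, but it is the same argument.
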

\vspace{-0.1cm}
In the proof, we note that local information propagation in message-passing GNNs is carried out in $\text{Enc}_{2\to 2}$ by a single $\mu=\{\{i_1\}, \{i_2, i_3, i_4\}\}$.
Other types of $\mu$ would carry out different operations, which provides intuition on the powerfulness of Transformers.

\section{Experiments}\label{sec:experiments}
\cutparagraphup
In this section, we demonstrate the capability of higher-order Transformers on a variety of tasks including synthetic data, large-scale graph regression, and set-to-(hyper)graph prediction.
Specifically, we use a synthetic node classification dataset from Gu~et.~al.~(2020)~\cite{gu2020implicit}, a molecular graph regression dataset from Hu~et.~al.~(2021)~\cite{hu2021ogb}, two set-to-graph prediction datasets from Serviansky~et.~al.~(2020)~\cite{serviansky2020set}, and three hyperedge prediction datasets used in Zhang~et.~al.~(2020)~\cite{zhang2020hyper}.
Details including the datasets and hyperparameters can be found in Appendix~\ref{sec:experimental_details}.
We implemented invariant MLP in Maron~et.~al.~(2019)~\cite{maron2019invariant} as one of baselines, which we abbreviate as $\text{MLP}_{\pi}$.
We build our model by gradually adding lightweight linear layers (Sec.~\ref{sec:lightweight}), sparse linear layers (Sec.~\ref{sec:sparse}), and kernel attention (Sec.~\ref{sec:kernel_linear}) and denote them by (D), (S), and ($\phi$), respectively.

\cutparagraphup
\paragraph{Runtime and memory analysis}
\begin{figure}[!t]
    \vspace{-0.2cm}
    \centering
    \includegraphics[width=0.99\textwidth]{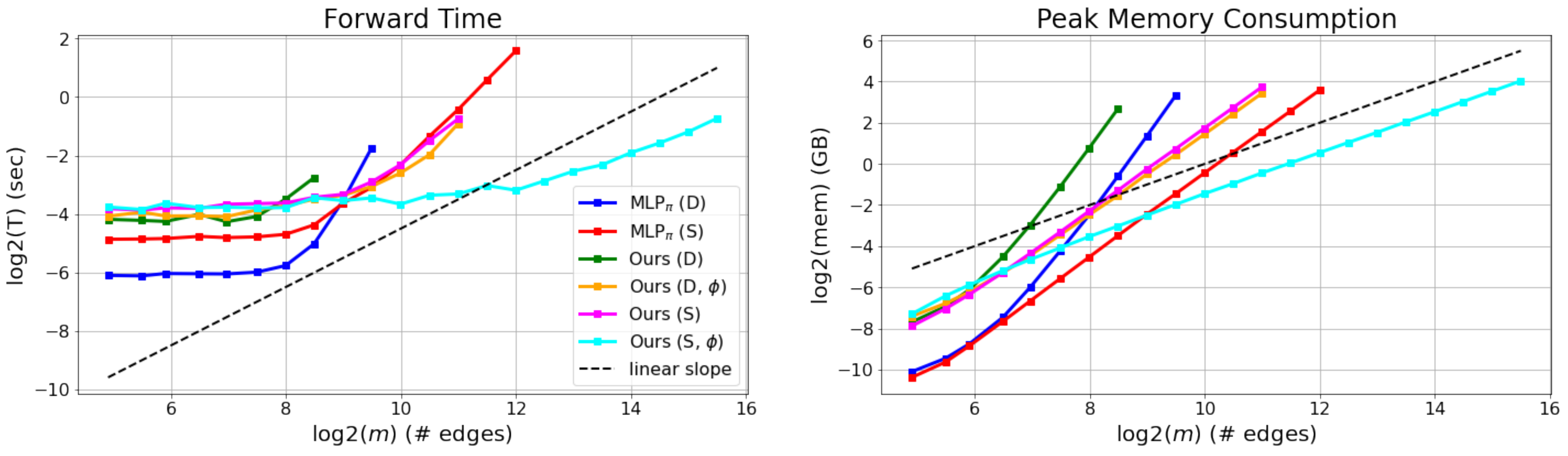}
    \vspace{-0.2cm}
    \caption{
    Comparison of all second-order models in terms of forward time, memory consumption, and maximal possible input size.
    Plots are shown until each model runs into out-of-memory error on a RTX 6000 GPU with 22GB.
    }
    \label{fig:cost_plot}
    \vspace{-0.4cm}
\end{figure}
To experimentally verify our claims on linear complexity in Sec.~\ref{sec:kernel_linear}, we conducted a runtime and memory consumption analysis on all second-order models using random graphs;
details can be found in Appendix~\ref{sec:runtime_memory_details}.
The results are shown in Figure~\ref{fig:cost_plot}.
Consistent with our theoretical claims, sparse second-order Transformer with kernel attention (Ours~(S,~$\phi$)) is the only variant that linearly scales to input size in terms of both time and memory.
Also, it is the only one that successfully scales to graphs with $>50\text{k}$ edges, while still very fast in smaller graphs.

\cutparagraphup
\paragraph{Synthetic chains}
To test the ability of higher-order Transformers in modeling long-range interactions in graphs, we used a synthetic dataset where the task is node classification in chain graphs.
Binary class information is provided in a terminal node, so a model is required to propagate the information across the chain by handling long-range dependency.
We used training and test sets with chains of length 20 and 200 respectively.
As baselines, we used 3 message-passing networks, a sparse invariant MLP, and ablated versions of sparse second-order Transformers where $\mu$ for global pooling are removed (w/o global).
Further details can be found in Appendix~\ref{sec:chains_details}.

\begin{wraptable}{r}{0.46\textwidth}
\vspace{-0.2cm}
\caption{Chain node classification results.}
\vspace{-0.1cm}
\centering
\begin{adjustbox}{width=0.46\textwidth}
    \label{table:chain}
        \begin{tabular}{lcc}
        \Xhline{2\arrayrulewidth}
        \\[-1em] Method & Micro-$F_1$ (\%) & Macro-$F_1$ (\%) \\
        \\[-1em]\Xhline{2\arrayrulewidth}
        \\[-1em] GCN & $47.78\pm4.17$ & $33.58\pm1.86$ \\
        \\[-1em] GIN-0 & $53.72\pm4.17$ & $36.22\pm1.86$ \\
        \\[-1em] GAT & $47.78\pm4.17$ & $33.58\pm1.86$ \\
        \\[-1em]\Xhline{2\arrayrulewidth}
        \\[-1em] $\text{MLP}_\pi$ (S) & $53.5\pm4.16$ & $36.04\pm1.97$ \\
        \\[-1em] Ours (S) w/o global & $53.72\pm4.17$ & $36.22\pm1.86$ \\
        \\[-1em] Ours (S, $\phi$) w/o global & $50.77\pm5.15$ & $35.22\pm2.17$ \\
        \\[-1em]\Xhline{2\arrayrulewidth}
        \\[-1em] Ours (S) & $\mathbf{100\pm0}$ & $\mathbf{100\pm0}$ \\
        \\[-1em] Ours (S, $\phi$) & $\mathbf{100\pm0}$ & $\mathbf{100\pm0}$ \\
        \\[-1em]\Xhline{2\arrayrulewidth}
    \end{tabular}
\end{adjustbox}
\end{wraptable}

The test performances are in Table~\ref{table:chain}.
We first note that second-order Transformers successfully capture long-range dependency up to 200 nodes apart, while message-passing networks fail.
Importantly, when the subset of basis that accounts for global pooling is eliminated, the performance of Transformers drops similar to message-passing networks, showing their importance in modeling long-range interaction.
Yet, simply having global basis is not enough, as seen in the failure of $\text{MLP}_\pi$.
This indicates fine-grained interaction modeling via attention is essential even in this simple task.

\cutparagraphup
\paragraph{Large-scale graph regression}
To further evaluate higher-order Transformers in large-scale setting, we used the PCQM4M-LSC dataset from Open Graph Benchmark \cite{hu2021ogb}, which is the largest graph-level regression dataset composed of $3.8\text{M}$ molecular graphs. 
As test data is unavailable, we report the Mean Absolute Error (MAE) on validation dataset.
In addition to the baselines from the benchmark, we also report the performances of second-order invariant MLP and a vanilla (first-order) Transformer\footnote{As vanilla Transformer operates on node features only, we used Laplacian graph embeddings \cite{belkin2003laplacian, dwivedi2020a} as positional embeddings so that the model can consider edge structure information.} for comparison.
Further details can be found in Appendix~\ref{sec:regression_details}.

\begin{wraptable}{r}{0.35\textwidth}
\caption{PCQM4M-LSC large-scale graph regression results. * indicates results are obtained with a shorter schedule ($10\%$ of the full iterations).}
\centering
\begin{adjustbox}{width=0.35\textwidth}
    \label{table:graph_regression}
        \begin{tabular}{lc}
        \Xhline{2\arrayrulewidth}\\[-1em]
        \\[-1em] Model & Validate MAE \\
        \\[-1em]\Xhline{2\arrayrulewidth}\\[-1em]
        \\[-1em] MLP-FINGERPRINT (\cite{hu2021ogb}) & 0.2044 \\
        \\[-1em] GCN (\cite{hu2021ogb}) & 0.1684 \\
        \\[-1em] GIN (\cite{hu2021ogb}) & 0.1536 \\
        \\[-1em] GCN-VN (\cite{hu2021ogb}) & 0.1510 \\
        \\[-1em] GIN-VN (\cite{hu2021ogb}) & 0.1396 \\
        \\[-1em]\Xhline{2\arrayrulewidth}\\[-1em]
        \\[-1em] Transformer + Laplacian PE* & 0.2162 \\
        \\[-1em] $\text{MLP}_\pi$ (S)* & 0.1464 \\
        \\[-1em] Ours (S, $\phi$)$_{-\text{SMALL}}$* & 0.1376 \\
        \\[-1em] Ours (S, $\phi$)* & 0.1294 \\
        \\[-1em]\Xhline{2\arrayrulewidth}\\[-1em]
        \\[-1em] Ours (S, $\phi$) & \textbf{0.1263} \\
        \\[-1em]\Xhline{2\arrayrulewidth}
    \end{tabular}
\end{adjustbox}
\end{wraptable}

The results are in Table~\ref{table:graph_regression}.
Second-order Transformer outperforms the message-passing GNNs (GCN, GIN) by a large margin, including the ones with a virtual node that can model long-range interactions (GCN-VN, GIN-VN).
It suggests that higher-order attention is potentially better in handling long-range interactions on graphs than the current practice of augmenting GNNs with a virtual node.
Furthermore, second-order Transformer outperforms invariant MLP, indicating that replacing sum-pooling with attention is important for scale-up.
Finally, second-order Transformer significantly outperforms vanilla Transformer with Laplacian graph embeddings.
This is presumably because node embeddings are insufficient to utilize features associated with edges, while second-order Transformers can naturally use all edge information.
We also note that invariant MLP and vanilla Transformer have worse complexity than the second-order Transformer ($\mathcal{O}(m^2)$ and $\mathcal{O}(n^2)$, respectively), while the second-order Transformer has $\mathcal{O}(m)$ complexity identical to GCN.

\cutparagraphup
\vspace{-0.05cm}
\paragraph{Set-to-graph prediction}
An important advantage of our framework distinguished from most existing GNNs is that, by design, it can be applied to any learning scenario with different input and output orders (mixed-order).
To demonstrate this, we tested higher-order Transformers in set-to-graph prediction tasks where the goal is to predict edge structure of a graph given a set of node features.
We used two datasets following the prior work \cite{serviansky2020set}.
The first dataset Jets originates from particle physics experiments, where collision of high-energy particles gives a set of observed particles.
The task is to partition the feature set of observed particles according to their origin.
By viewing each subset of particles as a fully-connected graph, the problem is cast as a set-to-graph prediction.
For the second dataset Delaunay, the task is to predict Delaunay triangulation \cite{deluanay1934sur} given a set of points in 2D space.
Two datasets are used for this task, one containing 50 points and the other containing varying number of points $\in\{20, ..., 80\}$.
For the baselines, we take the scores reported in Serviansky~et.~al.~(2020)~\cite{serviansky2020set}, which includes GNNs and invariant MLPs (S2G, S2G+) with $L_{1\rightarrow 1}$ and $L_{1\rightarrow 2}$.
Our model is made by substituting the linear layers with $\text{Enc}_{1\rightarrow 1}$ and $\text{Enc}_{1\rightarrow 2}$ (mixed-order) respectively.
Further details including the datasets, metrics, and baselines can be found in Appendix~\ref{sec:set2graph_details}.

\begin{table}[b!]
\vspace{-0.4cm}
\caption{Set-to-graph results. Lower-right panel shows Delaunay (20-80) sample from ours and S2G.}
\vspace{0.1cm}
\label{table:set2graph}
\centering
\begin{adjustbox}{width=0.79\textwidth}
    \begin{tabular}{cl|ccc|cl|cccc}
        \Xhline{2\arrayrulewidth}\\[-1em]
        \\[-1em]& Method & F1 & RI & ARI & & Method & Acc & Prec & Rec & F1 \\
        \\[-1em]\Xhline{2\arrayrulewidth}
        \\[-1em]\multirow{9}{3em}{\makecell{Jets\\(B)}}
        & AVR & 0.565 & 0.612 & 0.318 & \multirow{9}{4em}{\makecell{Delaunay\\(50)}} & SIAM & 0.939 & 0.766 & 0.653 & 0.704\\
        & MLP & 0.533 & 0.643 & 0.315 & & SIAM-3 & 0.911 & 0.608 & 0.538 & 0.570 \\
        & SIAM & 0.606 & 0.675 & 0.411 & & GNN0 & 0.826 & 0.384 & 0.966 & 0.549\\
        & SIAM-3 & 0.597 & 0.673 & 0.396 & & GNN5 & 0.809 & 0.363 & \textbf{0.985} & 0.530 \\
        & GNN & 0.586 & 0.661 & 0.381 & & GNN10 & 0.759 & 0.311 & 0.978 & 0.471\\
        & S2G & 0.646 & 0.736 & 0.491 & & S2G & 0.984 & 0.927 & 0.926 & 0.926 \\
        & S2G+ & 0.655 & 0.747 & 0.508 & & S2G+ & 0.983 & 0.927 & 0.925 & 0.926 \\
        \cline{2-5}\cline{7-11}\\[-1em]
        & Ours (D) & 0.667 & 0.746 & 0.520 & & Ours (D) & \textbf{0.994} & \textbf{0.981} & 0.967 & \textbf{0.974} \\
        & Ours (D, $\phi$) & \textbf{0.670} & \textbf{0.751} & \textbf{0.526} & & Ours (D, $\phi$) & 0.991 & 0.967 & 0.952 & 0.959 \\
        \Xhline{2\arrayrulewidth}
        \\[-1em]\multirow{9}{3em}{\makecell{Jets\\(C)}}
        & AVR & 0.695 & 0.650 & 0.326 & \multirow{9}{4em}{\makecell{Delaunay\\(20-80)}} & SIAM & 0.919 & 0.667 & 0.764 & 0.687 \\
        & MLP & 0.686 & 0.658 & 0.319 & & SIAM-3 & 0.895 & 0.578 & 0.622 & 0.587 \\
        & SIAM & 0.729 & 0.695 & 0.406 & & GNN0 & 0.810 & 0.387 & 0.946 & 0.536\\
        & SIAM-3 & 0.719 & 0.710 & 0.421 & & GNN5 & 0.777 & 0.352 & \textbf{0.975} & 0.506 \\
        & GNN & 0.720 & 0.689 & 0.390 & & GNN10 & 0.746 & 0.322 & 0.970 & 0.474 \\
        & S2G & 0.747 & 0.727 & 0.457 & & S2G & 0.947 & 0.736 & 0.934 & 0.799 \\
        & S2G+ & 0.751 & 0.733 & 0.467 & & S2G+ & 0.947 & 0.735 & 0.934 & 0.798 \\
        \cline{2-5}\cline{7-11}\\[-1em]
        & Ours (D) & 0.755 & 0.732 & 0.469 & & Ours (D) & \textbf{0.993} & \textbf{0.982} & 0.960 & \textbf{0.971} \\
        & Ours (D, $\phi$) & \textbf{0.757} & \textbf{0.735} & \textbf{0.473} & & Ours (D, $\phi$) & 0.989 & 0.948 & 0.956 & 0.952 \\
        \Xhline{2\arrayrulewidth}
        \\[-1em]\multirow{9}{3em}{\makecell{Jets\\(L)}}
        & AVR & 0.970 & 0.965 & 0.922  & \hspace{0cm}
        \multirow{9}{4em}{
        \begin{minipage}{0.55\textwidth}
            \vspace{-0.1cm}
            \includegraphics[width=\linewidth]{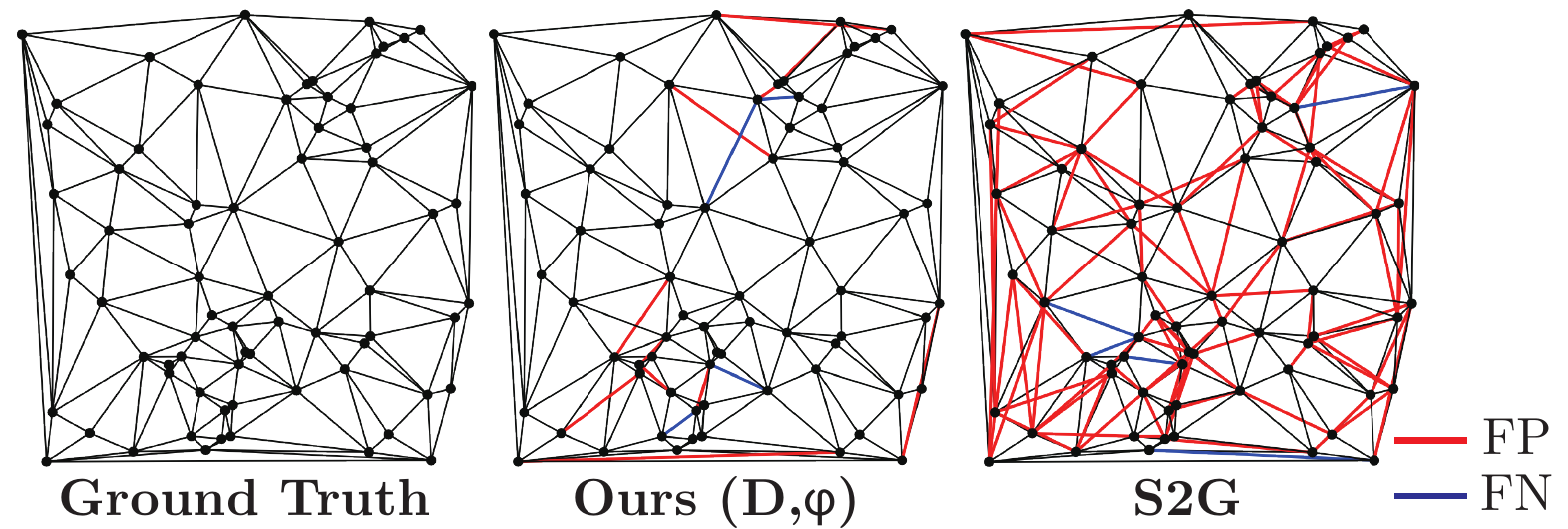}
        \end{minipage}
        }\\
        & MLP & 0.960 & 0.957 & 0.894 \\
        & SIAM & 0.973 & 0.970 & 0.925 \\
        & SIAM-3 & 0.895 & 0.876 & 0.729 \\
        & GNN & 0.972 & 0.970 & 0.929 \\
        & S2G & 0.972 & 0.970 & 0.931\\
        & S2G+ & 0.971 & 0.969 & 0.929 \\
        \cline{2-5}\\[-1em]
        & Ours (D) & \textbf{0.974} & \textbf{0.972} & \textbf{0.935} & \\
        & Ours (D, $\phi$) & \textbf{0.974} & \textbf{0.972} & \textbf{0.935} \\
        \Xhline{2\arrayrulewidth}
    \end{tabular}
\end{adjustbox}
\vspace{-0.2cm}
\end{table}

\vspace{-0.05cm}
The results are outlined in Table~\ref{table:set2graph}.
Mixed-order Transformers, both softmax and kernel attention, have favorable scores over all baselines.
Especially, they outperform all baselines by a large margin in Delaunay: note that GNNs fall into trivial solution with high recall but very low precision.
We particularly note that the Transformers' performance in Delaunay~(20-80) is comparable to Delaunay~(50), with $0.3$-$0.7\%$ drop in F1 score.
Compared with S2G that exhibits $\sim12\%$ drop, this indicates attention mechanism within $\text{Enc}_{1\rightarrow 2}$ is helpful in modeling varying number of nodes.

\cutparagraphup
\paragraph{$k$-uniform hyperedge prediction}
One major advantage of our framework is that it naturally extends to higher-order data (hypergraphs).
To demonstrate this, we consider higher-order extension of set-to-graph prediction task, where the goal is predicting $k$-uniform hyperedges (e.g.,~user-location-activity) from node features.
For evaluation, we used three datasets for transductive 3-edge prediction following the prior work \cite{zhang2020hyper}.
The first dataset GPS derives from a GPS network \cite{zheng2010collaborative}, and contains (user-location-activity) hyperedges.
The second dataset MovieLens is a social network dataset of tagging activities \cite{harper2016the}, containing (user-movie-tag) hyperedges.
The third dataset Drug comes from a medicine network from FAERS\footnote{https://www.fda.gov/Drugs/}, containing (user-drug-reaction) hyperedges.
As baselines, we consider higher-order invariant MLP (S2G+) and the state-of-the-art, self-attention based Hyper-SAGNN \cite{zhang2020hyper}.
We implemented higher-order Transformer and S2G+ by substituting $\text{Enc}_{1\to 2}$ and $L_{1\to 2}$ in set-to-graph architectures to $\text{Enc}_{1\to 3}$ and $L_{1\to 3}$, respectively.
Further details including the datasets, metrics, and baselines can be found in Appendix~\ref{sec:implementation_details_hyperedge} and Appendix~\ref{sec:hyperedge_details}.

The results are in Table~\ref{table:hyperedge_prediction}.
Higher-order Transformer outperforms S2G+ in all datasets, and Hyper-SAGNN in all but one metric in MovieLens.
The results suggest that higher-order self-attention is effective in learning higher-order representation beyond second-order graphs.
The results are encouraging especially because we did not introduce any form of task-specific heuristics into the model, while some of the baselines such as Hyper-SAGNN depend on many inductive biases (static/dynamic branches, Hadamard power, etc.).

\begin{table}[t!]
\caption{$k$-uniform hyperedge prediction results.
For Hyper-SAGNN, we reproduced the scores using the open-sourced code.
For additional baselines including node2vec, we take the scores reported in Zhang~et.~al.~(2020)~\cite{zhang2020hyper}.}
\vspace{0.1cm}
\centering
\begin{adjustbox}{width=0.65\textwidth}
    \label{table:hyperedge_prediction}
    \begin{tabular}{l|cc|cc|cc}
        \Xhline{2\arrayrulewidth}\\[-1em]
        \\[-1em] & \multicolumn{2}{c|}{GPS} & \multicolumn{2}{c|}{MovieLens} & \multicolumn{2}{c}{Drug} \\
        & AUC & AUPR & AUC & AUPR & AUC & AUPR \\
        \\[-1em]\Xhline{2\arrayrulewidth}\\[-1em]
        \\[-1em] node2vec-mean (\cite{zhang2020hyper}) & 0.563 & 0.191 & 0.562 & 0.197 & 0.670 & 0.246 \\
        node2vec-min (\cite{zhang2020hyper}) & 0.570 & 0.185 & 0.539 & 0.186 & 0.684 & 0.258 \\
        DHNE (\cite{zhang2020hyper}) & 0.910 & 0.668 & 0.877 & 0.668 & 0.925 & 0.859 \\
        Hyper-SAGNN-E & 0.947 & 0.788 & 0.922 & \textbf{0.792} & 0.963 & 0.897 \\
        Hyper-SAGNN-W & 0.907 & 0.632 & 0.909 & 0.683 & 0.956 & 0.890 \\
        S2G+ (S) & 0.943 & 0.726 & 0.918 & 0.737 & 0.963 & 0.898 \\
        \\[-1em]\Xhline{2\arrayrulewidth}\\[-1em]
        \\[-1em] Ours (S, $\phi$) & \textbf{0.952} & \textbf{0.804} & \textbf{0.923} & 0.771 & \textbf{0.964} & \textbf{0.901} \\
        [0.1em]\Xhline{2\arrayrulewidth}
    \end{tabular}
\end{adjustbox}
\end{table}

\section{Discussion}\label{sec:discussion}
\cutparagraphup
In this paper, we proposed a generalization of Transformers to higher-orders, and applied a number of design strategies that reduce their complexity to a feasible level.
Higher-order Transformers are attractive, both in theory and application.
In theoretical aspect, it inherits the theoretical completeness and expressive power of invariant MLPs.
In application aspect, it is potentially more powerful than message-passing neural networks due to global interaction modeling, and can be extended to a variety of useful mixed-order tasks involving sets, graphs, and hypergraphs.

At the same time, our work has some limitations that need to be addressed in future work.
First, although complexity to input size can be lowered to linear, the number of basis grows rapidly with increasing order due to $\mathcal{O}((0.792k/\ln(k+1))^k)$ asymptotic formula of $k$-th Bell number \cite{berend2010improved}, still making the model infeasible in higher orders.
Improvement approaches such as finding a compact subset of basis that retains universality \cite{serviansky2020set} and exploiting unorderedness \cite{maron2019invariant} are promising in this direction.
Second, our work builds upon tensor-based representation of graphs, which makes it difficult to be directly extended to hypergraphs containing edges with varying orders (e.g., co-citation networks).
This is a common challenge to all tensor-based methods \cite{hartford2018deep, maron2019invariant, maron2019provably, keriven2019universal, serviansky2020set}, and we believe addressing this would be an important future research direction.

\paragraph{Acknowledgements}
This work was supported in part by National Research Foundation of Korea (NRF) grant funded by the Korea government (MSIT) (2021R1C1C1012540, 2021R1A4A3032834) and Institute of Information \& Communications Technology Planning \& Evaluation (IITP) grant (2021-0-00537, 2019-0-00075).

\newpage
{\small
\bibliography{main}
}

\newpage
\appendix
\section{Appendix}
\cutsubsectionup
\subsection{Proofs}
\subsubsection{Proof of Proposition~\ref{proposition:compactness} (Section~\ref{sec:subsec_higher_order_transformer_layers})}\label{sec:proof_proposition_compactness}
We start with the following lemmas:
\begin{lemma}\label{lemma:class_separation}
Let $\mu$ an equivalence class of order-$(k+l)$ multi-indices.
Then, the set of all $\mathbf{i}\in[n]^k$ such that $(\mathbf{i},\mathbf{j})\in\mu$ for some $\mathbf{j}\in[n]^l$ is an equivalence class of order-$k$ multi-indices.
Likewise, the set of all $\mathbf{j}$ such that $(\mathbf{i},\mathbf{j})\in\mu$ for some $\mathbf{i}$ is an equivalence class of order-$l$ multi-indices.
\end{lemma}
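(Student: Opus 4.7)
\textbf{Proof plan for Lemma~\ref{lemma:class_separation}.} My approach is to use the correspondence between equivalence classes of order-$k$ multi-indices and partitions of $[k]$ that was established in Section~\ref{sec:preliminary}. Let $P_\mu$ be the partition of $[k+l]$ corresponding to $\mu$, and define $P_\mu|_{[k]}$ to be the partition of $[k]$ obtained by intersecting each block of $P_\mu$ with $[k]$ and discarding empty blocks; let $\mu_1$ be the equivalence class of order-$k$ multi-indices associated with $P_\mu|_{[k]}$. The goal is to show $S := \{\mathbf{i}\in[n]^k : \exists\,\mathbf{j}\in[n]^l,\ (\mathbf{i},\mathbf{j})\in\mu\}$ equals $\mu_1$. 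The claim about $\mathbf{j}$ follows by the symmetric argument using $P_\mu|_{k+[l]}$.

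For the inclusion $S\subseteq\mu_1$, I would take any $(\mathbf{i},\mathbf{j})\in\mu$ and observe that the equality pattern $i_a = i_b\Leftrightarrow(\mathbf{i},\mathbf{j})_a=(\mathbf{i},\mathbf{j})_b$ on positions $a,b\in[k]$ is exactly what $P_\mu|_{[k]}$ records. Hence $\mathbf{i}$ has the equality pattern of $\mu_1$, so $\mathbf{i}\in\mu_1$. For the reverse inclusion $\mu_1\subseteq S$, I would fix any $(\mathbf{i},\mathbf{j})\in\mu$ (such a pair exists since $\mu$ is nonempty) and any $\mathbf{i}'\in\mu_1$, and then exhibit a permutation $\pi\in S_n$ with $\pi(\mathbf{i})=\mathbf{i}'$. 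Since $\mu$ is closed under the simultaneous action of $S_n$ on all $k+l$ positions, $(\pi(\mathbf{i}),\pi(\mathbf{j}))=(\mathbf{i}',\pi(\mathbf{j}))\in\mu$, giving $\mathbf{i}'\in S$.

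The one nontrivial step is producing the permutation $\pi$. Because $\mathbf{i}$ and $\mathbf{i}'$ share the same equality pattern, the map $i_a\mapsto i'_a$ is a well-defined bijection between the sets of distinct entries appearing in $\mathbf{i}$ and $\mathbf{i}'$; since both are subsets of $[n]$ of equal size, this bijection extends to a permutation of $[n]$ (any extension on the complement works). This is the main technical point, but it is a standard observation that I expect to dispatch in a couple of lines. Everything else is bookkeeping on partitions and the group action, so no substantive obstacle remains.
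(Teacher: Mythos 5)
Your proposal is correct and follows essentially the same route as the paper's proof: both establish the two inclusions via the entrywise action of $S_n$, with the key step being that a permutation carrying $\mathbf{i}$ to $\mathbf{i}'$ carries the whole pair $(\mathbf{i},\mathbf{j})$ to another member of $\mu$. The only (cosmetic) differences are that you identify the target class by the restricted partition rather than by a representative, and you construct the permutation explicitly from the shared equality pattern, whereas the paper obtains it directly from its definition of $\sim$ as the existence of such a permutation.
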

\begin{proof}
We only prove for $\mathbf{i}$, as proof for $\mathbf{j}$ is analogous.
For some $(\mathbf{i}_1, \mathbf{j}_1)\in\mu$, let us denote $\mathbf{i}_1$'s equivalence class as $\mu_k$.
It is sufficient that we prove $\mathbf{i}\in\mu_k\Leftrightarrow(\mathbf{i},\mathbf{j})\in\mu$ for some $\mathbf{j}$.

($\Rightarrow$)
For all $\mathbf{i}\in\mu_k$, as $\mathbf{i}_1\sim\mathbf{i}$ we have $\mathbf{i}=\pi(\mathbf{i}_1)$ for some $\pi\in S_n$.
As $\pi$ acts on multi-indices entry-wise, we have $\pi(\mathbf{i}_1,\mathbf{j}_1)=(\mathbf{i},\pi(\mathbf{j}_1))$.
As the equivalence pattern is invariant to node permutation by definition, we have $\pi(\mathbf{i}_1,\mathbf{j}_1)\sim(\mathbf{i},\pi(\mathbf{j}_1))\sim(\mathbf{i}_1,\mathbf{j}_1)$, and thus $(\mathbf{i},\pi(\mathbf{j}_1))\in\mu$.
Therefore, for all $\mathbf{i}\in\mu_k$, we always have $(\mathbf{i}, \mathbf{j})\in\mu$ when we set $\mathbf{j}=\pi(\mathbf{j}_1)$.

($\Leftarrow$)
For all $(\mathbf{i},\mathbf{j})\in\mu$, as $(\mathbf{i},\mathbf{j})\sim(\mathbf{i}_1,\mathbf{j}_1)$ we have $(\mathbf{i}, \mathbf{j})=\pi(\mathbf{i}_1, \mathbf{j}_1)$ for some $\pi\in S_n$.
We have equivalently $\mathbf{i}=\pi(\mathbf{i}_1)$ and $\mathbf{j}=\pi(\mathbf{j}_1)$ for the $\pi$, which leads to $\mathbf{i}\sim \mathbf{i}_1$ and therefore $\mathbf{i}\in\mu_k$.
\end{proof}
\begin{lemma}\label{lemma:unique_entries}
Let $\mu$ an equivalence class of order-$k$ multi-indices.
Then, every $\mathbf{i}\in\mu$ contains the same number of unique elements, which is equal to $|\mu|$ i.e., the number of nonempty subsets in $\mu$'s partition.
\end{lemma}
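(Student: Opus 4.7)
The plan is to unpack the definition of an equivalence class in terms of its partition, and then show that the number of distinct entries of any $\mathbf{i}\in\mu$ is literally counted by the number of blocks of that partition. First, I would recall the correspondence (established earlier in Section~\ref{sec:preliminary}) between equivalence classes of order-$k$ multi-indices and partitions of $[k]$: namely, $\mathbf{i}$ belongs to $\mu$ iff the equivalence relation on $[k]$ defined by $a \equiv_{\mathbf{i}} b \Leftrightarrow i_a = i_b$ coincides with the partition associated with $\mu$. Since this equality pattern is preserved by any $\pi \in S_n$ (because $\pi$ acts as a bijection on values), the partition is indeed a well-defined invariant of the class, confirming the setup.

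Next, I would exhibit the bijection between blocks of $\mu$'s partition and distinct entries of $\mathbf{i}$. Fix $\mathbf{i}\in\mu$ and define a map from the blocks of $\mu$'s partition to the entries of $\mathbf{i}$ by sending each block $S$ to the common value $i_a$ for any $a\in S$. This map is well defined by the equality pattern, surjective onto the set of distinct entries of $\mathbf{i}$ (every entry comes from some block), and injective because two distinct blocks $S\neq S'$ contain indices $a\in S$, $a'\in S'$ with $i_a\neq i_{a'}$ (otherwise $a,a'$ would lie in the same block). Hence the number of distinct entries of $\mathbf{i}$ equals the number of nonempty blocks, which is $|\mu|$.

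Finally, to conclude that this count is the same for every member of $\mu$, I would note that it depends only on the partition, which is an invariant of the class. Alternatively, for any $\mathbf{i},\mathbf{i}'\in\mu$ we have $\mathbf{i}' = \pi(\mathbf{i})$ for some $\pi\in S_n$, and applying the bijection $\pi$ to the distinct entries of $\mathbf{i}$ yields exactly the distinct entries of $\mathbf{i}'$, so the counts agree.

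The main obstacle, if any, is purely notational: one must be careful that $|\mu|$ denotes the number of blocks of the partition (as used in the statement), not the cardinality of the set of multi-indices in the class — the latter depends on $n$, whereas the former does not. Once this convention is fixed, the argument is essentially definitional and no deeper machinery is needed.
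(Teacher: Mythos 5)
Your proposal is correct and takes essentially the same route as the paper: both arguments reduce the claim to the fact that the equality pattern of every $\mathbf{i}\in\mu$ is fixed by $\mu$'s partition, so the blocks of the partition are in bijection with the distinct entries of $\mathbf{i}$. Your write-up is slightly more explicit about verifying injectivity and surjectivity of that correspondence, but no new idea is introduced.
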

\begin{proof}
All $\mathbf{i}\in\mu$ have the same equality pattern, specified by $\mu$'s representative partition.
Specifically, for all $\mathbf{i}\in\mu$, $\mathbf{i}_a=\mathbf{i}_b$ holds iff $\mathbf{i}_a$ and $\mathbf{i}_b$ belong to the same subset within $\mu$'s partition.
Therefore, each nonempty subset within $\mu$'s partition specifies exactly one value within $\mathbf{i}$, and any $\mathbf{i}_a,\mathbf{i}_b$ s.t. $\mathbf{i}_a\neq\mathbf{i}_b$ are contained in distinct subsets within $\mu$'s partition.
Thus, each subset in $\mu$'s partition specifies one unique element in $\mathbf{i}$, and we have the number of unique elements in $\mathbf{i}$ equal to $|\mu|$ for all $\mathbf{i}\in\mu$.
\end{proof}
Now, we prove Proposition~\ref{proposition:compactness}.
\begin{proof}
From Lemma~\ref{lemma:class_separation}, let us denote the set of all $\mathbf{i}\in[n]^k$ such that $(\mathbf{i},\mathbf{j})\in\mu$ as an order-$k$ equivalence class $\mu_k$, and denote the set of all $\mathbf{j}$ such that $(\mathbf{i},\mathbf{j})\in\mu$ as an order-$l$ equivalence class $\mu_q$.

Then, in Eq.~\eqref{eqn:attention_qk_redundant}, to compute $\boldsymbol{\alpha}^\mu_{\mathbf{i},\mathbf{j}}~\forall(\mathbf{i},\mathbf{j})\in\mu$ it is sufficient that we have $\mathbf{K}_\mathbf{i}^\mu ~\forall\mathbf{i}\in\mu_k$ and $\mathbf{Q}_\mathbf{j}^\mu ~\forall\mathbf{j}\in\mu_q$.
Based on the fact, we now analyze and reduce $\mathbf{Q}^\mu = L^{\mu}_{k\to l}(\mathbf{A})$ ($\mathbf{K}^\mu = L^{\mu}_{k\to k}(\mathbf{A})$ can be reduced analogously by letting $l=k$).
From Eq.~\eqref{eqn:equivariant_layer} and Eq.~\eqref{eqn:basis_tensor_weight_bias}, we can write the computation of $\mathbf{Q}^\mu$ as follows, with $\alpha$, $\lambda$ equivalence classes of order-$(k+l)$ and order-$l$ multi-indices and $\mathbf{k}\in[n]^k$:
\begin{align}
    &\mathbf{Q}^\mu_\mathbf{j} = \sum_{\alpha}{\sum_{\mathbf{k}}{\mathbf{B}^{\alpha}_{\mathbf{k}, \mathbf{j}}\mathbf{A}_{\mathbf{k}}w_{\alpha}}} + \sum_{\lambda}{\mathbf{C}^{\lambda}_{\mathbf{j}}b_{\lambda}},\label{eqn:query_redundant}\\
    \text{where }
    &\begin{array}{ll}
        \mathbf{B}_{\mathbf{i}, \mathbf{j}}^{\alpha} = \left\{
        \begin{array}{cc}
            1   &  \text{\scalebox{0.95}{$(\mathbf{k}, \mathbf{j})\in\alpha$}} \\
            0   &  \text{\scalebox{0.95}{ otherwise}}
        \end{array}\right.;&
        \mathbf{C}_{\mathbf{j}}^{\lambda} = \left\{
        \begin{array}{cc}
            1   &  \text{\scalebox{0.95}{$\mathbf{j}\in\lambda$}}\\
            0   &  \text{\scalebox{0.95}{ otherwise}}
        \end{array}\right.\label{eqn:basis_tensor_redundant}
    \end{array}
\end{align}
A key idea is that, when we want $\mathbf{Q}_{\mathbf{j}}^\mu$ only for $\mathbf{j}\in\mu_q$, only a subset of equivalence classes among $\alpha$ or $\lambda$ does effective computation and we can discard the rest.
Specifically, we can discard an equivalence class $\alpha$ if it contains some $(\mathbf{k}, \mathbf{j})$ with $\mathbf{j}\notin\mu_q$.
This is because, for such $\alpha$, $(\mathbf{k}, \mathbf{j})\notin\alpha$ if $\mathbf{j}\in\mu_q$, leading to $\mathbf{B}_{\mathbf{k}, \mathbf{j}}^\alpha=0$ if $\mathbf{j}\in\mu_q$.
Therefore, such $\alpha$ does not contribute to $\mathbf{Q}_\mathbf{j}^\mu ~\forall\mathbf{j}\in\mu_q$ and can be discarded.
On the other hand, an equivalence class $\alpha$ containing some $(\mathbf{k}, \mathbf{j})$ with $\mathbf{j}\in\mu_q$ does effective computation and should be kept.

From that, it turns out that the number of effective $\alpha$ is $\leq\text{b}(k+u_q)$, where $u_q=u(\mathbf{j})=|\mu_q|$ is the number of unique entries within some $\mathbf{j}\in\mu_q$ (see Lemma~\ref{lemma:unique_entries}).
Recall that for an effective $\alpha$, $\mathbf{j}\in\mu_q$ holds for all $(\mathbf{k}, \mathbf{j})\in\alpha$.
Within $\alpha$'s representative partition, as each $\mathbf{j}\in\mu_q$ has exactly $u_q$ unique values, we always have $\{\mathbf{j}_1, ..., \mathbf{j}_l\}$ contained in exactly $u_q$ distinct subsets.
Thus, the possible number of effective $\alpha$ is upper-bounded by the number of ways of partitioning a set with $k+|\mu_q|$ elements, which is $\text{b}(k+u_q)$.
As for the bias, we can repeat the analysis with $k=0$ and the number of effective $\lambda$ is $\leq\text{b}(u_q)$.

We now show that a lower-order linear layer $L^{\mu}_{k\to u_q}$ can compute $\mathbf{Q}^\mu_\mathbf{j}~\forall\mathbf{j}\in\mu_q$ in Eq.~\eqref{eqn:query_redundant}.
Let us denote $\mathcal{A}$ the set of all effective $\alpha$ and $\mathcal{L}$ the set of all effective $\lambda$.
Then we can rewrite Eq.~\eqref{eqn:query_redundant} as:
\begin{align}\label{eqn:q_compact_basis_2}
    \mathbf{Q}^\mu_\mathbf{j} &= \sum_{\alpha\in\mathcal{A}}{\sum_{\mathbf{k}}{\mathbf{B}^{\alpha}_{\mathbf{k}, \mathbf{j}}\mathbf{A}_{\mathbf{k}}w_{\alpha}}} + \sum_{\lambda\in\mathcal{L}}{\mathbf{C}^{\lambda}_{\mathbf{j}}b_{\lambda}},
\end{align}
where $\mathcal{A}$ has $\leq\text{b}(k+u_q)$ elements and $\mathcal{L}$ has $\leq\text{b}(u_q)$ elements.
Assume we have some linear layer $L^{\mu}_{k\to u_q}$.
With $\mathbf{j}'\in[n]^{u_q}$, and $\beta$, $\theta$ equivalence classes of order-$(k+u_q)$ and order-$u_q$ multi-indices respectively, we can write:
\begin{align}
    &\tilde{\mathbf{Q}}^\mu_{\mathbf{j}'} = \sum_{\beta}{\sum_{\mathbf{k}}{\mathbf{B}^{\beta}_{\mathbf{k}, {\mathbf{j}'}}\mathbf{A}_{\mathbf{k}}w_{\beta}}} + \sum_{\theta}{\mathbf{C}^{\theta}_{\mathbf{j}'}b_{\theta}},\label{eqn:query_compact}\\
    \text{where }
    &\begin{array}{ll}
        \mathbf{B}_{\mathbf{k}, \mathbf{j}'}^{\beta} = \left\{
        \begin{array}{cc}
            1   &  \text{\scalebox{0.95}{$(\mathbf{k}, \mathbf{j}')\in\beta$}} \\
            0   &  \text{\scalebox{0.95}{ otherwise}}
        \end{array}\right.;&
        \mathbf{C}_{\mathbf{j}'}^{\theta} = \left\{
        \begin{array}{cc}
            1   &  \text{\scalebox{0.95}{$\mathbf{j}'\in\theta$}}\\
            0   &  \text{\scalebox{0.95}{ otherwise}}
        \end{array}\right.\label{eqn:basis_tensor_compact}
    \end{array}
\end{align}

We now identify the condition that $\tilde{\mathbf{Q}}^\mu$ contains all $\mathbf{Q}^\mu_\mathbf{j}~\forall\mathbf{j}\in\mu_q$.
For that, we need to define a mapping between index space of $\tilde{\mathbf{Q}}^\mu$ and $\mathbf{Q}^\mu$.
To this end, we define a surjection $g: [k]\to[u_q]$ that satisfies $\mathbf{j}_{a}=\mathbf{j}_{b}\Leftrightarrow g(a)=g(b)$.
We can always define such $g$ due to the property of equivalence classes that, for all $a, b\in[l]$, $\mathbf{j}_{a}=\mathbf{j}_{b}$ holds iff $\mathbf{j}_{a}, \mathbf{j}_{b}$ belong to a same subset within $\mu_q$'s partition.
By indexing the subsets within $\mu_q$'s partition, we define $g(a)~\forall a\in[l]$ as the index of the subset that $a$ belongs.
Then, for every $\mathbf{j}\in\mu_q$, we can find an order-$u_q$ compact form $\mathbf{j}'\in[n]^{u_q}$ containing $u_q$ unique elements through $g$: for $c\in[u_q]$ that $c=g(a)=g(b)=\cdots$,  we construct $\mathbf{j}'$ such that $\mathbf{j}'_c=\mathbf{j}_a=\mathbf{j}_b=\cdots$.
We define $f_\mu^q:[n]^q\to[n]^{u_q}$ as a mapping that gives $\mathbf{j}'=f_\mu^q(\mathbf{j})~\forall\mathbf{j}\in\mu_q$.

We now reduce Eq.~\eqref{eqn:q_compact_basis_2} into Eq.~\eqref{eqn:query_compact}.
First, for each $\alpha\in\mathcal{A}$, we assign a distinct order-$(k+u_q)$ equivalence class $\beta$ that satisfies: $(\mathbf{k},\mathbf{j})\in\alpha\Leftrightarrow(\mathbf{k},\mathbf{j}')\in\beta$ with $\mathbf{j}'=f_\mu^q(\mathbf{j})$.
This can be done by changing $\alpha$'s partition into $\beta$'s, by merging each set of $\mathbf{j}_a=\mathbf{j}_b=...$ into corresponding $\mathbf{j}'_c$ following $f_\mu^q$.
We similarly assign a distinct $\theta$ to each $\lambda\in\mathcal{L}$.
Then, we set $w_\alpha=w_\beta$ for all paired $\alpha$ and $\beta$, and set $w_\beta=0$ for every $\beta$ not paired with any $\alpha$.
We similarly set $b_\theta=b_\lambda$ for all paired $\theta$ and $\lambda$, and $b_\theta=0$ for every $\theta$ not paired with any $\lambda$.
From the definition of basis tensors, $\mathbf{B}_{\mathbf{k},\mathbf{j}}^\alpha=\mathbf{B}_{\mathbf{k},\mathbf{j}'}^\beta$ for all paired $\alpha$ and $\beta$, and $\mathbf{C}_{\mathbf{j}}^\lambda=\mathbf{C}_{\mathbf{j}'}^\theta$ for all paired $\lambda$ and $\theta$.
Therefore, we have $\tilde{\mathbf{Q}}^\mu_{\mathbf{j}'}=\mathbf{Q}^\mu_{\mathbf{j}}$ for all $\mathbf{j}\in\mu_q$.
Conclusively, we can always construct $L^{\mu}_{k\to u_q}$ and $f_q^\mu: [n]^{l}\to[n]^{u_q}$ that gives $\mathbf{Q}_\mathbf{j}^\mu~\forall\mathbf{j}\in\mu_q$.

As noted in the beginning of the proof, we can perform the same analysis with $k=l$ to show the analogous result for $\mathbf{K}^\mu$.
We now have all entries $\mathbf{K}_\mathbf{i}^\mu~\forall\mathbf{i}\in\mu_k$ and $\mathbf{Q}_\mathbf{j}^\mu~\forall\mathbf{j}\in\mu_q$ to compute $\boldsymbol{\alpha}^\mu_{\mathbf{i},\mathbf{j}}~\forall(\mathbf{i},\mathbf{j})\in\mu$ (Eq.~\eqref{eqn:attention_qk_redundant}) and therefore Proposition~\ref{proposition:compactness} holds.
\end{proof}
\begin{figure}[!t]
    \centering
    \includegraphics[width=0.8\textwidth]{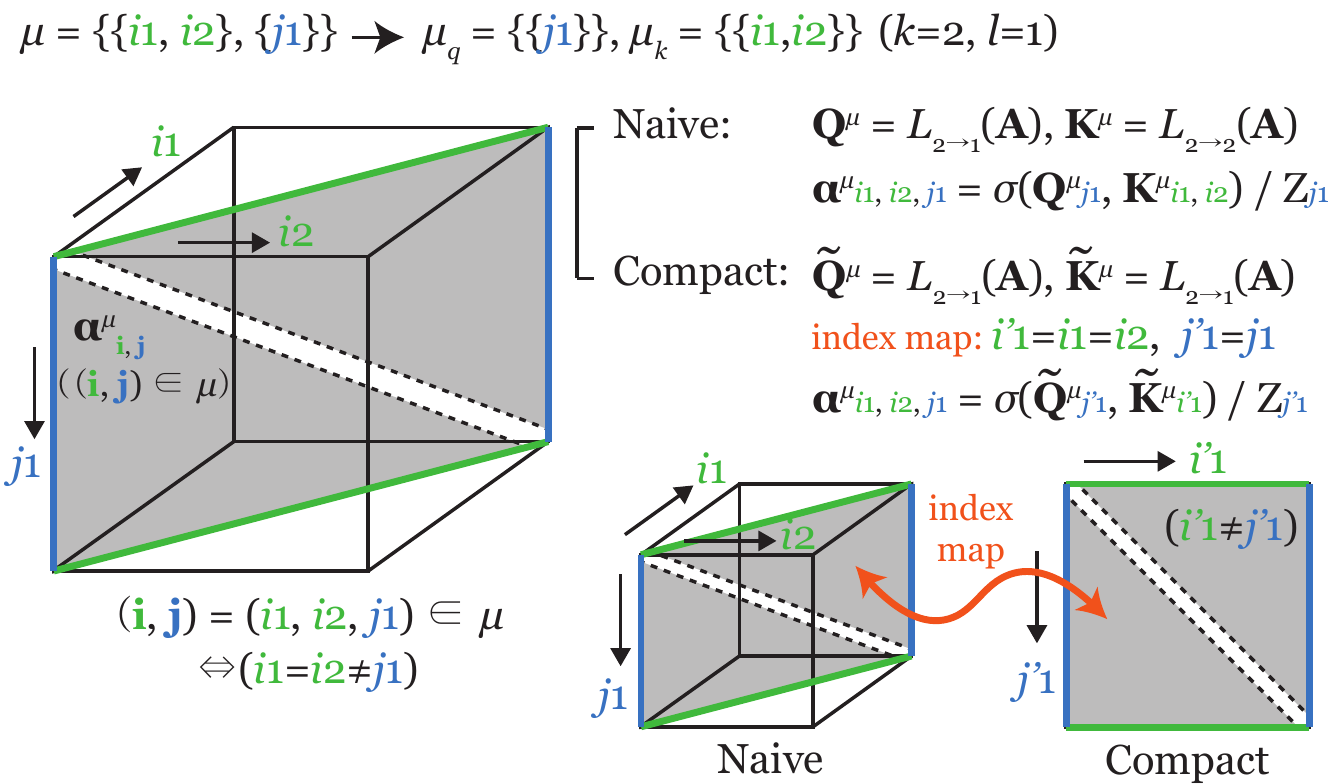}
    \vspace{-0.2cm}
    \caption{
    Exemplar illustration of computing $\boldsymbol{\alpha}^\mu_{\mathbf{i},\mathbf{j}}~\forall(\mathbf{i},\mathbf{j})\in\mu$ with lower-order query and key, for $k=2$, $l=1$, $\mathbf{i}=(i_1,i_2)$, $\mathbf{j}=(j_1)$, and $\mu=\{\{i_1, i_2\}, \{j_1\}\}$.
    }
    \label{fig:compact_illustration}
    \vspace{-0.3cm}
\end{figure}
In Figure~\ref{fig:compact_illustration} we provide an example of computing $\boldsymbol{\alpha}^\mu$ with lower-order query and key.

\subsubsection{Proof of Property~\ref{property:complexity} (Section~\ref{sec:implementation})}\label{sec:proof_property_complexity}
\begin{proof}
We begin by analyzing the complexity of an equivariant linear layer $L_{k\to l}$ (Eq.~\eqref{eqn:equivariant_layer}).
In the inner summation $\sum_\mathbf{i}\mathbf{B}_{\mathbf{i},\mathbf{j}}^{\mu}\mathbf{A}_\mathbf{i}$,
with $u_k$ and $u_q$ the number of unique entries in $\mathbf{i}$ and $\mathbf{j}$ respectively, we have $n^{\leq u_k}$ effective multiplication and summations for each output index $\mathbf{j}$.
Inequality is when $\mathbf{j}_b=\mathbf{i}_a$ for some $a$, $b$, which corresponds to indexing operation rather than summation.
Thus, the number of operations done by the summation is $n^{u_q}n^{\leq u_k}$.
With outer summation over $\mu$, we have $\sum_\mu{n^{u_q}n^{\leq u_k}}$ operations.
As $u_q\leq l$ and $u_k\leq k$ by definition, we have inequality $\sum_\mu{n^{u_q}n^{\leq u_k}} \leq \text{b}(k+l)n^{k+l}$.
Application of $w_\mu$ gives us $dd'\sum_\mu{n^{u_q}n^{u_k}} \leq \text{b}(k+l)dd'n^{k+l}$ number of operations.
For the bias, in the inner term $\mathbf{C}_\mathbf{j}^\lambda$, we need a single addition for each $\mathbf{j}$ and thus the number of operations for a $\lambda$ is $n^{u_q}\leq n^l$.
Summation over $\lambda$ and application of bias parameters gives us $\text{b}(l)d'n^{u_q}\leq \text{b}(l)d'n^l$ operations.
Collectively, we have $\leq \text{b}(k+l)dd'n^{k+l} + \text{b}(l)d'n^l$ number of operations.
As $k, l, d, d'$ are constants that does not depend on $n$, we obtain $\mathcal{O}(n^{k+l})$ complexity.

Computation of $\text{Enc}_{k\to l}(\mathbf{A})$ (Eq.~\eqref{eqn:higher_transformer_layer}) involves computing 
$\text{Attn}_{k\to l}(\mathbf{A})$, $\text{MLP}_{l\to l} (\text{Attn}_{k\to l}(\mathbf{A}))$ and adding them.
Let us analyze $\text{Attn}_{k\to l}(\mathbf{A})$ first.
To compute $\boldsymbol{\alpha}^{h,\mu}$ from input, we need to compute $L^{\mu}_{k\to u_q}(\mathbf{A})$ and $L^{\mu}_{k\to u_k}(\mathbf{A})$, followed by pairwise similarity computation and re-indexing.
Assuming that each pairwise similarity computation and indexing has constant complexity, we have $\mathcal{O}(n^{k+u_q}+n^{k+u_k}+n^{u_q+u_k})$.\footnote{Normalization over keys gives an additive complexity $\mathcal{O}(n^{u_q+u_k})$, which can be absorbed to the formula.}
As $u_q\leq l$ and $u_k\leq k$, we have $\mathcal{O}(n^{k+l}+n^{2k})$.
It is worth to note that $\mathcal{O}(n^{2k})$ term comes from computation of keys from input.
Having computed $\boldsymbol{\alpha}^{h,\mu}$, the inner summation $\sum_\mathbf{i}\boldsymbol{\alpha}_{\mathbf{i},\mathbf{j}}^{h,\mu}\mathbf{A}_\mathbf{i}$, similar to in $L_{k\to l}$, has $n^{u_q}n^{\leq u_k}$ computations.
With outer summation over $\mu$, we have $\sum_\mu{n^{u_q}n^{\leq u_k}}\leq \text{b}(k+l)n^{k+l}$ operations.
Summation over heads and application of weight matrices gives us $\leq \text{b}(k+l)Hd_H^2dn^{k+l}$ operations, which is $\mathcal{O}(n^{k+l})$.
For application of $\text{MLP}_{l\to l}(\cdot)$, we sum the complexity of two linear layers $L_{l\to l}$ and element-wise $\text{ReLU}$, which gives us $\mathcal{O}(n^{2l})$.
Conclusively, the complexity of $\text{Enc}_{k\to l}$ is $\mathcal{O}(n^{2k}+n^{k+l}+n^{2l})$.
\end{proof}

\subsubsection{Proof of Proposition~\ref{proposition:lightweight} (Section~\ref{sec:lightweight})}\label{sec:proof_proposition_lightweight}
\begin{proof}
We assume $k, l > 0$.
Among the equivalence classes $\mu$ of order-$(k+l)$ multi-indices, let us select a subset $\mathcal{M}$ that all $\mu\in\mathcal{M}$ satisfies the following: for all $(\mathbf{i},\mathbf{j})\in\mu$, $\mathbf{i}_a=\mathbf{j}_b$ holds for all $a\in[k]$ and some $b\in[l]$.
In other words, every element in $\mathbf{i}$ is identical with at least one element in $\mathbf{j}$, and $\mathbf{i}$ becomes a single fixed multi-index when we fix $\mathbf{j}$ (we denote the fixed $\mathbf{i}=\text{fix}(\mathbf{j})$).
This renders $\mathbf{B}_{\mathbf{i},\mathbf{j}}^\mu=1\Leftrightarrow\mathbf{i}=\text{fix}(\mathbf{j})$ for such $\mu$, and consequently the inner-summation $\sum_{\mathbf{i}}{\mathbf{B}^{\mu}_{\mathbf{i}, \mathbf{j}}\mathbf{A}_{\mathbf{i}}w_{\mu}}$ in Eq.~\eqref{eqn:lightweight} reduces to elementwise indexing $\mathbf{A}_{\text{fix}(\mathbf{j})}w_{\mu}$.
As the size of $\mathcal{M}$ is upper-bounded by a constant $\text{b}(k+l)$, we have $\mathcal{O}(n^l)$ complexity when computing Eq.~\eqref{eqn:lightweight}.
With the trivial case $\mu=\{\{i_1, ..., i_k, j_1, ..., j_l\}\}\in\mathcal{M}$, we can always find nonempty $\mathcal{M}$.
\end{proof}
\begin{figure}[!t]
    \centering
    \vspace{-0.4cm}
    \includegraphics[width=0.75\textwidth]{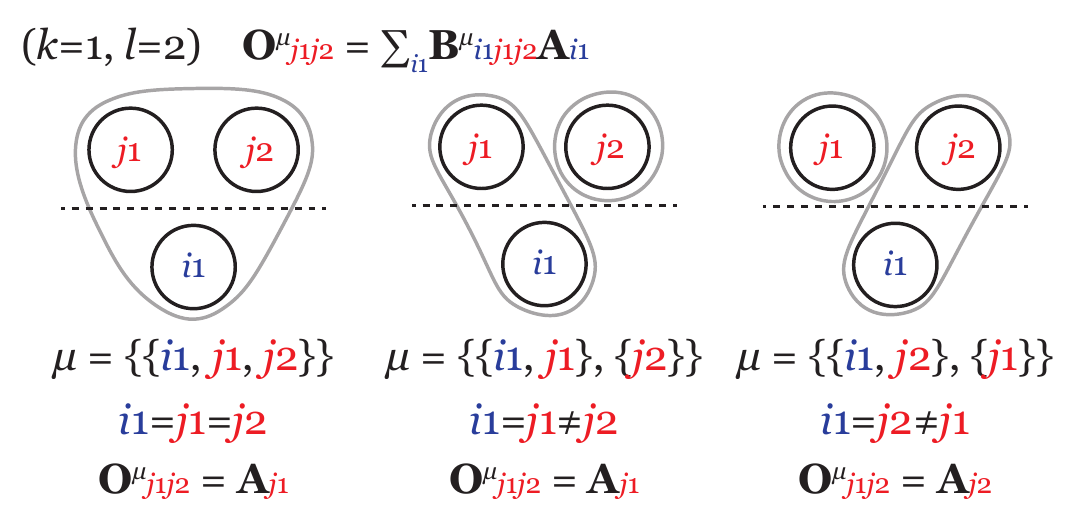}
    \vspace{-0.2cm}
    \caption{
    Exemplar illustration of all equivalence classes included in lightweight linear layer $\bar{L}_{1\to 2}$.
    }
    \label{fig:lightweight_mu}
\end{figure}
To provide some intuition, we illustrate all $\mu\in\mathcal{M}$ for $k=1$, $l=2$ in Figure~\ref{fig:lightweight_mu}.

\subsubsection{Validity of Proposition~\ref{thm:generalization} when using lightweight linear layers (Section~\ref{sec:lightweight})}\label{sec:proof_validity_lightweight}
As stated in the main text, $\text{Enc}_{k\to l}$ (Eq.~\eqref{eqn:higher_transformer_layer}) with linear layers for key, query, and MLP changed to $\bar{L}$ still generalizes $L_{k\to l}$.
This can be shown simply by plugging $\bar{L}$ into the proof of Proposition~\ref{thm:generalization}.
We can still assume $\boldsymbol{\alpha}^{h,\mu}_{\mathbf{i},\mathbf{j}}=1$ for all \scalebox{0.95}{$(\mathbf{i},\mathbf{j})\in\mu$} by setting $\bar{L}$ for key and query to output constants, and can reduce $\text{MLP}_{l\to l}$ composed of $\bar{L}$ to an invariant bias as we subsample $\mu\in\mathcal{M}$ but keep all $\lambda$ for the bias.
Thus, Eq.~\eqref{eqn:higher_transformer_layer} can reduce to Eq.~\eqref{eqn:equivariant_layer} and Proposition~\ref{thm:generalization} holds.

\subsubsection{Proof of Property~\ref{property:sparse_complexity} (Section~\ref{sec:sparse})}\label{sec:proof_property_sparse_complexity}
\begin{proof}
We begin from sparse equivariant linear layer $L_{k\to l}$ (Eq.~\eqref{eqn:sparse_equivariant_layer}).
In the inner summation $\sum_{\mathbf{i}\in E}\mathbf{B}_{\mathbf{i},\mathbf{j}}^{\mu}\mathbf{A}_\mathbf{i}$,
the number of multiplication and addition for each $\mathbf{j}$ is upper-bounded by $m=|E|$.
As the number of output multi-indices $\mathbf{j}$ is bounded by $|E'|\leq m\binom{k}{l}$, the effective number of operations are $\leq m^2\binom{k}{l}$.
With outer summation over $\mu$, we have $\leq \text{b}(k+l)\binom{k}{l}m^2$ operations, leading to  complexity $\mathcal{O}(m^2)$.
For the lightweight linear layers $\bar{L}$ (Proposition~\ref{proposition:lightweight}) that precludes summation over input, we trivially have $\mathcal{O}(m)$ complexity as we do not sum over $\mathbf{i}$.

Now, we analyze the complexity of sparse self-attention computation (Eq.~\eqref{eqn:sparse_self_attention}).
To compute $\boldsymbol{\alpha}^{\mu}$ from input, we need to compute lightweight linear layers $\bar{L}^{\mu}_{k\to u_q}(\mathbf{A}, E)$ and $\bar{L}^{\mu}_{k\to u_k}(\mathbf{A}, E)$, followed by pairwise similarity computation of nonzero entries.
As $u_q, u_k \leq k$, we have complexity $\mathcal{O}(m)$ for the linear layers and $\mathcal{O}(m^2)$ for pairwise computation.
Having computed $\boldsymbol{\alpha}^{\mu}$, the inner summation $\sum_{\mathbf{i}\in E} \boldsymbol{\alpha}_{\mathbf{i},\mathbf{j}}^{\mu}\mathbf{A}_\mathbf{i}$ has $\leq m$ computations.
Enumerating over $\mathbf{j}$, we have $\mathcal{O}(m^2)$.

Finally, we analyze the complexity of $\text{Enc}_{k\to l}$ composed of the sparse linear layers and self-attention.
This involves adding the outputs of 
$\text{Attn}_{k\to l}(\mathbf{A}, E)$ (which we already addressed) and $\text{MLP}_{l\to l}(\text{Attn}_{k\to l}(\mathbf{A}, E), E)$.
For application of $\text{MLP}_{l\to l}$, we sum the complexity of two lightweight linear layers $\bar{L}_{l\to l}$ and element-wise $\text{ReLU}$, which gives us $\mathcal{O}(m)$.
In summary, the complexity of sparse $\text{Enc}_{k\to l}$ is $\mathcal{O}(m^2)$.
\end{proof}

\subsubsection{Proof of Property~\ref{property:kernel_complexity} (Section~\ref{sec:kernel_linear})}\label{sec:proof_property_kernel_complexity}
\begin{proof}
Summation over $\mathbf{i}\in\mathcal{I}$ decouples $\mathbf{i}$ from $\mathbf{j}$ and allows reuse of computation over $\mathbf{j}$.
As the summation over $\mathbf{i}\in\mathcal{I}$ involves $\mathcal{O}(n^{k})$ operations and we share it over all query indices $\mathbf{j}$, self-attention reduces to elementwise application and we obtain $\mathcal{O}(n^k+n^l)$ complexity.
As computation of $\tilde{\mathbf{Q}}^{\mu}$, $\tilde{\mathbf{K}}^{\mu}$ and application of $\text{MLP}_{l\to l}$ are $\mathcal{O}(n^k+n^l)$ with lightweight linear layers, we have $\mathcal{O}(n^k+n^l)$ collective complexity for $\text{Enc}_{k\to l}$.
When adopted into sparse $\text{Enc}_{k\to l}$ (Eq.~\eqref{eqn:sparse_self_attention}), summation over $\mathbf{i}$ and enumeration over $\mathbf{j}$ all reduce to $\mathcal{O}(m)$ and we thereby have $\mathcal{O}(m)$ complexity.
\end{proof}

\subsubsection{Proof of Theorem~\ref{thm:message_passing} (Section~\ref{sec:message_passing})}\label{sec:proof_theorem_message_passing}
\begin{proof}
With message function $M:\mathbb{R}^{2d_v+d_e}\to \mathbb{R}^{d_m}$ and update function $U:\mathbb{R}^{d_v+d_m}\to \mathbb{R}^{d}$, a message passing step takes node features $\mathbf{X}\in\mathbb{R}^{n\times d_v}$ and edge features $\mathbf{E}\in\mathbb{R}^{n\times n\times d_e}$ as input and outputs node features $\mathbf{H}\in\mathbb{R}^{n\times d}$ according to following.
With $i,j\in[n]$:
\begin{align}
    \mathbf{M}_j &= \sum_{i\in\mathcal{N}(j)}{M(\mathbf{X}_j, \mathbf{X}_i, \mathbf{E}_{ij})}\\
    \mathbf{H}_j &= U(\mathbf{X}_j, \mathbf{M}_j),
\end{align}
where $\mathcal{N}(j)$ denotes incoming neighbors of $j$-th node, \emph{i.e.,} $\{i | (i,j)\in E\}$.

We now show how a composition of two $\text{Enc}_{2\to 2}$ can approximate above computation.
\begin{enumerate}
    \item As a first step, we encode $\mathbf{X}$ and $\mathbf{E}$ into a single $\mathbf{A}\in\mathbb{R}^{n\times n\times (2d_v+d_e)}$ \cite{maron2019invariant}.
    In the first $d_v$ channels, we replicate $\mathbf{X}$ on the rows.
    In the next $d_v$ channels, we replicate $\mathbf{X}$ on the columns.
    In the last $d_e$ channels, we put $\mathbf{E}$.
    Additionally, to account for output positions (node features), we augment $E$ with self-loops and make $E'=E\cup\{(i,i)~\forall i\in[n]\}$.
    \item Then, we make the first $\text{Enc}_{2\to 2}$ approximate the message function $M(\cdot)$, so that $\text{Enc}_{2\to 2}(\mathbf{A})_{ij}\approx M(\mathbf{X}_j, \mathbf{X}_i, \mathbf{E}_{ij})$.
    To do this, we first reduce $\text{Attn}_{2\to 2}(\mathbf{A})_{ij}=\mathbf{A}_{ij}$ and apply $\text{MLP}_{l\to l}$ on top of it.
    We reduce $\text{MLP}_{l\to l}$ to entry-wise $\text{MLP}$.
    As $\text{Attn}_{2\to 2}(\mathbf{A})_{ij}=\mathbf{A}_{ij}$ is a concatenation of $\mathbf{X}_i$, $\mathbf{X}_j$, $\mathbf{E}_{ij}$, with universal approximation theorem \cite{hornik1989multilayer}, we can have the output of the first $\text{Enc}_{2\to 2}(\mathbf{A})_{ij}=M(\mathbf{X}_j, \mathbf{X}_i, \mathbf{E}_{ij})+\epsilon_1~\forall i, j$ where $\epsilon_1$ is approximation error.
    This also holds when we leverage sparsity and restrict the index scope to $(i, j)\in E'$; in this case, we make $(i, j)\in E'\setminus E$ contain zero vectors.
    \item Before feeding the output to the second $\text{Enc}_{2\to 2}$, we concatenate the original input $\mathbf{A}$ with the output of the first layer in the channel dimension to make $\mathbf{A}'\in\mathbb{R}^{n\times n\times{(2d_v+d_e+d_m)}}$.
    This gives $\mathbf{X}_i, \mathbf{X}_j, \mathbf{E}_{ij}$ encoded in the first $2d_v+d_e$ channels and $M(\mathbf{X}_j, \mathbf{X}_i, \mathbf{E}_{ij})+\epsilon_1$ encoded in the last $d_m$ channels of $\mathbf{A}'$.
    This operation can be trivially absorbed within the MLP of the first layer, but we separate for simplicity.
    \item Now, we make the second $\text{Enc}_{2\to 2}$ jointly approximate summation of messages over neighbors $\sum_{i\in\mathcal{N}(j)}(\cdot)$ and update function $U(\cdot)$, so that $\text{Enc}_{2\to 2}(\mathbf{A}')_{jj}\approx \mathbf{H}_j = U(\mathbf{X}_j,\mathbf{M}_j)$.
    First, we reduce $\text{Attn}_{2\to 2}(\mathbf{A}')$ to summation over neighbors.
    For this we only need two equivalence classes $\mu_1=\{\{1\}, \{2, 3, 4\}\}$ and $\mu_2=\{\{1, 2, 3, 4\}\}$.
    Omitting normalization, we can write Eq.~\eqref{eqn:kernel_querywise} as follows.
    For $\mu_1$ we set $u_q=1$, $u_k=2$, $\mathbf{i}=(i,j)$, $\mathbf{j}=(j,j)$, $\mathbf{i}'=(i,j)$, $\mathbf{j}'=j$, and for $\mu_2$ we set $u_q=1$, $u_k=1$, $\mathbf{i}=(j,j)$, $\mathbf{j}=(j,j)$, $\mathbf{i}'=j$, $\mathbf{j}'=j$.
    \begin{align}\label{eqn:reduced_2_1_attn}
        \text{Attn}_{2\to 2}(\mathbf{A}')_{jj} = \phi(\tilde{\mathbf{Q}}^{\mu_1}_{j})^\top \sum_{\{i|(i,j)\in E'\}}{\phi(\tilde{\mathbf{K}}^{\mu_1}_{ij})}
        \mathbf{A}'_{ij}w_{\mu_1} + \phi(\tilde{\mathbf{Q}}^{\mu_2}_{j})^\top \phi(\tilde{\mathbf{K}}^{\mu_2}_{j})
        \mathbf{A}'_{jj}w_{\mu_2}.
    \end{align}
    Let entries in $\phi(\tilde{\mathbf{K}}^{\mu})$, $\phi(\tilde{\mathbf{Q}}^{\mu})$ be $\frac{1_{d_K}}{\sqrt{d_K}}$ so that their dot product is 1.
    Eq.~\eqref{eqn:reduced_2_1_attn} reduces to:
    \begin{align}
        \text{Attn}_{2\to 2}(\mathbf{A}')_{jj} &= \sum_{\{i|(i,j)\in E'\}}{\mathbf{A}'_{ij}w_{\mu_1}} + \mathbf{A}'_{jj}w_{\mu_2}
    \end{align}
    We make $w_{\mu_1}$ zero-out the first $2d_v+d_e$ channels and $w_{\mu_2}$ zero-out the last $d_e+d_m$ channels.
    Then, we have $\text{Attn}_{2\to 2}(\mathbf{A}')_{jj}$ contain $\mathbf{X}_j$ in the first $d_v$ channels and $\mathbf{M}_j + \epsilon_2=\sum_{i | (i,j)\in E'}(M(\mathbf{X}_j, \mathbf{X}_i, \mathbf{E}_{ij}))+\epsilon_2$ in the last $d_m$ channels where $\epsilon_2$ is approximation error\footnote{Note that message summation over $\{i | (i,j)\in E'\}$ is equivalent to summation over $\{i | (i,j)\in E\}=\mathcal{N}(j)$ because we set message zero at $(i,j)\in E'\setminus E$.}.
    We then apply $\text{MLP}_{l\to l}$ on top of it, which can approximate the update function by universal approximation theorem \cite{hornik1989multilayer} and we have $\text{Attn}_{2\to 2}(\mathbf{A}')_{jj} = U(\mathbf{X}_j, \mathbf{M}_j) + \epsilon_3$ where $\epsilon_3$ is approximation error.
\end{enumerate}
Overall, the approximation error $\epsilon_i$ at each step depends on $\epsilon_{i-1}$ $(i>1)$, the MLP that approximates relevant function, and uniform bounds and uniform continuity of the approximated functions \cite{hornik1989multilayer}.

In the opposite, message passing cannot approximate some of the operations done by a single Transformer layer $\text{Enc}_{2\to 2}$.
This can be seen from the fact that, given a graph with diameter $d(E)$, we need at least $d(E)$ message passing operations to approximate output of $\text{Enc}_{2\to 2}$.
This is because a single $\text{Enc}_{2\to 2}$ can impose dependency between any pair of input and output indices $\mathbf{i}, \mathbf{j}$, while message passing requires $d(E)$ steps in the worst case.
Consequently, the approximation becomes impossible when the graph contains $>1$ disconnected components, which leads to $d(E)\to+\infty$.
\end{proof}

\subsection{Experimental details (Section~\ref{sec:experiments})}\label{sec:experimental_details}
\begin{table}[!t]
\caption{Statistics of the datasets.}
\label{table:dataset_statistics}
\begin{subtable}[h]{.5\linewidth}
    \caption{Statistics of the synthetic chains dataset.}
    \centering
    \label{table:chain_dataset}
        \begin{tabular}{ll}
        \Xhline{2\arrayrulewidth}\\[-1em]
        \\[-1em] Dataset & Chains \\
        \\[-1em]\Xhline{2\arrayrulewidth}\\[-1em]
        \\[-1em] Size & 60 \\
        \\[-1em] \# classes & 2 \\
        \\[-1em] Average \# node & 20 (train) / 200 (test) \\
        \\[-1em]\Xhline{2\arrayrulewidth}
    \end{tabular}
\end{subtable}\hfill
\begin{subtable}[h]{.5\linewidth}
    \caption{Statistics of the PCQM4M-LSC dataset.}
    \centering
    \label{table:regression_dataset}
        \begin{tabular}{ll}
        \Xhline{2\arrayrulewidth}\\[-1em]
        \\[-1em] Dataset & PCQM4M-LSC \\
        \\[-1em]\Xhline{2\arrayrulewidth}\\[-1em]
        \\[-1em] Size & 3.8M \\
        \\[-1em] Average \# node & 14.1 \\
        \\[-1em] Average \# edge & 14.6 \\
        \\[-1em]\Xhline{2\arrayrulewidth}
    \end{tabular}
\end{subtable}\vfill
\vspace{0.2cm}
\centering
\begin{subtable}[h]{.7\linewidth}
    \caption{Dataset statistics for set-to-graph prediction.}
    \centering
    \label{table:set2graph_dataset}
    \begin{tabular}{llll}
        \Xhline{2\arrayrulewidth}\\[-1em]
        \\[-1em] Dataset & Jets & Delaunay (50) & Delaunay (20-80) \\
        \\[-1em]\Xhline{2\arrayrulewidth}\\[-1em]
        \\[-1em] Size & 0.9M & 55k & 55k \\
        \\[-1em] Average \# node & 7.11 & 50 & 50 \\
        \\[-1em] Average \# edge & 35.9 & 273.6 & 273.9 \\
        \\[-1em]\Xhline{2\arrayrulewidth}
    \end{tabular}
\end{subtable}\vfill
\vspace{0.2cm}
\centering
\begin{subtable}[h]{.7\linewidth}
    \caption{Dataset statistics for $k$-uniform hyperedge prediction. For each dataset, each row under "\#~nodes" correspond to each row under "Node~types".}
    \centering
    \label{table:hyperedge_datasets}
    \begin{tabular}{llll}
        \Xhline{2\arrayrulewidth}\\[-1em]
        \\[-1em] Dataset & GPS & MovieLens & Drug \\
        \\[-1em]\Xhline{2\arrayrulewidth}\\[-1em]
        \\[-1em] \multirow{3}{5em}{Node types} & user & user & user \\
        \\[-1em] & location & movie & drug \\
        \\[-1em] & activity & tag & reaction \\[-1em]\\
        \\[-1em]\Xhline{2\arrayrulewidth}\\[-1em]
        \\[-1em] \multirow{3}{5em}{\# nodes} & 146 & 2,113 & 12 \\
        \\[-1em] & 70 & 5,908 & 1,076 \\
        \\[-1em] & 5 & 9,079 & 6,398 \\[-1em]\\
        \\[-1em]\Xhline{2\arrayrulewidth}\\[-1em]
        \\[-1em] \# edges & 1,436 & 47,957 & 171,756 \\[-1em]\\
        \\[-1em]\Xhline{2\arrayrulewidth}
    \end{tabular}
\end{subtable}
\end{table}

\begin{table}[!p]
\caption{Architectures of the models used in our experiments.
$\text{Enc}_{k\to l}(d, d_H, H)$ denotes $\text{Enc}_{k\to l}$ with hidden dimension $d$, head dimension $d_H$, and number of heads $H$.
$L_{k\to l}(d)$ denotes $L_{k\to l}$ with output dimension $d$.
$\text{MLP}(n, d, d_\text{out})$ denotes an elementwise MLP with $n$ hidden layers, hidden dimension $d$, output dimension $d_\text{out}$, and ReLU non-linearity.}
\label{table:architectures}
\begin{subtable}{.99\linewidth}
\vspace{-0.1cm}
\caption{Architectures for runtime and memory analysis.}
\vspace{-0.1cm}
\centering
\label{table:resource_architecture}
\begin{tabular}{lc}
    \Xhline{2\arrayrulewidth}\\[-1em]
    \\[-1em] Method & Architecture \\
    \\[-1em]\Xhline{2\arrayrulewidth}\\[-1em]
    \\[-1em] $\text{MLP}_\pi$ (D/S) & \cameraready{$[L_{2\to 2}(32)-\text{ReLU}]_{\times4}$-$L_{2\to 0}(32)$} \\
    \\[-1em] Ours (D/S, ($\phi$)) & \cameraready{$\text{Enc}_{2\to 2,\phi}(32, 8, 4)_{\times4}$-$\text{Enc}_{2\to 0}(32, 8, 4)$-LN-Linear(32)} \\
    \\[-1em]\Xhline{2\arrayrulewidth}
\end{tabular}
\end{subtable}\vfill
\vspace{0.4cm}
\begin{subtable}{.99\linewidth}
\caption{Architectures for chain experiment. Output dimension of a layer is denoted in parenthesis.}
\centering
\label{table:chain_architecture}
\begin{tabular}{lc}
    \Xhline{2\arrayrulewidth}\\[-1em]
    \\[-1em] Method & Architecture \\
    \\[-1em]\Xhline{2\arrayrulewidth}\\[-1em]
    \\[-1em] GCN & \cameraready{GCNConv(16)-ReLU-GCNConv(16)-Linear(2)} \\
    \\[-1em] GIN-0 & \cameraready{GINConv(16)-ReLU-GINConv(16)-Linear(2)} \\
    \\[-1em] GAT & \cameraready{GATConv(16)-ReLU-GATConv(16)-Linear(2)} \\
    \\[-1em]\Xhline{2\arrayrulewidth}\\[-1em]
    \\[-1em] $\text{MLP}_\pi$ (S) & \cameraready{$L_{2\to 2}$(16)-ReLU-$L_{2\to 1}$(16)-Linear(2)} \\
    \\[-1em] Ours (S) w/o global & \cameraready{$\text{Enc}_{2\to 2,\text{ablated}}$(16)-$\text{Enc}_{2\to 1,\text{ablated}}$(16)-LN-Linear(2)} \\
    \\[-1em] Ours (S, $\phi$) w/o global & \cameraready{$\text{Enc}_{2\to 2,\phi,\text{ablated}}$(16)-$\text{Enc}_{2\to 1,\phi,\text{ablated}}$(16)-LN-Linear(2)} \\
    \\[-1em]\Xhline{2\arrayrulewidth}\\[-1em]
    \\[-1em] Ours (S) & \cameraready{$\text{Enc}_{2\to 2}$(16)-$\text{Enc}_{2\to 1}$(16)-LN-Linear(2)} \\
    \\[-1em] Ours (S, $\phi$) & \cameraready{$\text{Enc}_{2\to 2,\phi}$(16)-$\text{Enc}_{2\to 1,\phi}$(16)-LN-Linear(2)} \\
    \\[-1em]\Xhline{2\arrayrulewidth}
\end{tabular}
\end{subtable}\vfill
\vspace{0.4cm}
\begin{subtable}{.99\linewidth}
\caption{Architectures for graph regression experiment.}
\vspace{-0.1cm}
\centering
\begin{adjustbox}{width=0.9\textwidth}
    \label{table:regression_architecture}
    \begin{tabular}{lc}
        \Xhline{2\arrayrulewidth}\\[-1em]
        \\[-1em] Method & Architecture \\
        \\[-1em]\Xhline{2\arrayrulewidth}\\[-1em]
        \\[-1em] Transformer + Laplacian PE & \cameraready{$\text{Enc}_{1\to 1}(256, 16, 16)_{\times8}$-$\text{Enc}_{1\to 0}(256, 16, 16)$-LN-Linear(1)} \\
        \\[-1em] $\text{MLP}_\pi$ (S) & \cameraready{$[L_{2\to 2}(256)-\text{ReLU}]_{\times8}$-$L_{2\to 0}(256)$-Linear(1)} \\
        \Xhline{2\arrayrulewidth}\\[-1em]
        \\[-1em] Ours (S, $\phi$)$_{-\text{SMALL}}$ & \cameraready{$\text{Enc}_{2\to 2,\phi}(256, 8, 4)_{\times8}$-$\text{Enc}_{2\to 0}(256, 16, 8)$-LN-Linear(1)} \\
        \\[-1em] Ours (S, $\phi$) & \cameraready{$\text{Enc}_{2\to 2,\phi}(512, 16, 4)_{\times8}$-$\text{Enc}_{2\to 0}(512, 16, 16)$-LN-Linear(1)} \\
        \\[-1em]\Xhline{2\arrayrulewidth}
    \end{tabular}
\end{adjustbox}
\end{subtable}\vfill
\vspace{0.4cm}
\begin{subtable}{.99\linewidth}
\caption{Architectures for set-to-graph experiment.}
\vspace{-0.1cm}
\centering
\begin{adjustbox}{width=0.9\textwidth}
    \label{table:set2graph_architecture}
        \begin{tabular}{llc}
        \Xhline{2\arrayrulewidth}\\[-1em]
        \\[-1em] Method & Dataset & Architecture \\
        \Xhline{2\arrayrulewidth}\\[-1em]
        \\[-1em]\multirow{3}{7em}{S2G/S2G+ \cite{serviansky2020set}} & Jets &
        \cameraready{$[L_{1\to 1}(256)-\text{ReLU}]_{\times5}$-$L_{1\to 2}(256)$-MLP(1, 256, 1)} \\
        \\[-1em] & Delaunay (50) &
        \cameraready{$[L_{1\to 1}(500)-\text{ReLU}]_{\times7}$-$L_{1\to 2}(500)$-MLP(2, 1000, 1)}\\
        \\[-1em] & Delaunay (20-80) &
        \cameraready{$[L_{1\to 1}(500)-\text{ReLU}]_{\times7}$-$L_{1\to 2}(500)$-MLP(2, 1000, 1)}\\
        \Xhline{2\arrayrulewidth}\\[-1em]
        \\[-1em]\multirow{3}{7em}{Ours (D)} & Jets &
        \cameraready{$\text{Enc}_{1\to 1}(128, 32, 4)_{\times4}$-$\text{Enc}_{1\to 2}(128, 32, 4)$-MLP(1, 256, 1)} \\
        \\[-1em] & Delaunay (50) &
        \cameraready{$\text{Enc}_{1\to 1}(256, 64, 4)_{\times5}$-$\text{Enc}_{1\to 2}(256, 64, 4)$-MLP(2, 256, 1)}\\
        \\[-1em] & Delaunay (20-80) &
        \cameraready{$\text{Enc}_{1\to 1}(256, 64, 4)_{\times5}$-$\text{Enc}_{1\to 2}(256, 64, 4)$-MLP(2, 256, 1)}\\
        \Xhline{2\arrayrulewidth}\\[-1em]
        \\[-1em]\multirow{3}{7em}{Ours (D, $\phi$)} & Jets &
        \cameraready{$\text{Enc}_{1\to 1, \phi}(128, 32, 4)_{\times4}$-$\text{Enc}_{1\to 2, \phi}(128, 32, 4)$-MLP(1, 256, 1)} \\
        \\[-1em] & Delaunay (50) &
        \cameraready{$\text{Enc}_{1\to 1, \phi}(256, 64, 4)_{\times5}$-$\text{Enc}_{1\to 2, \phi}(256, 64, 4)$-MLP(2, 256, 1)}\\
        \\[-1em] & Delaunay (20-80) &
        \cameraready{$\text{Enc}_{1\to 1, \phi}(256, 64, 4)_{\times5}$-$\text{Enc}_{1\to 2, \phi}(256, 64, 4)$-MLP(2, 256, 1)}\\
        \\[-1em]\Xhline{2\arrayrulewidth}
    \end{tabular}
\end{adjustbox}
\end{subtable}\vfill
\vspace{0.4cm}
\begin{subtable}{.99\linewidth}
\caption{Architectures for $k$-uniform hyperedge prediction experiment.}
\vspace{-0.1cm}
\centering
\label{table:hyperedge_architecture}
\begin{adjustbox}{width=0.9\textwidth}
    \begin{tabular}{llc}
        \Xhline{2\arrayrulewidth}\\[-1em]
        \\[-1em] Method & Dataset & Architecture \\
        \Xhline{2\arrayrulewidth}\\[-1em]
        \\[-1em]\multirow{3}{5em}{S2G+ (S)} & GPS &
        \cameraready{$[L_{1\to 1}(64)-\text{ReLU}]_{\times1}$-$L_{1\to 3}(64)$-MLP(4, 64, 1)} \\
        \\[-1em] & MovieLens &
        \cameraready{$[L_{1\to 1}(64)-\text{ReLU}]_{\times3}$-$L_{1\to 3}(64)$-MLP(2, 64, 1)}\\
        \\[-1em] & Drug &
        \cameraready{$[L_{1\to 1}(64)-\text{ReLU}]_{\times3}$-$L_{1\to 3}(64)$-MLP(2, 64, 1)}\\
        \Xhline{2\arrayrulewidth}\\[-1em]
        \\[-1em]\multirow{3}{5em}{Ours (S, $\phi$)} & GPS &
        \cameraready{$\text{Enc}_{1\to 1, \phi}(64, 16, 8)_{\times1}$-$\text{Enc}_{1\to 3, \phi}(64, 16, 8)$-MLP(4, 64, 1)} \\
        \\[-1em] & MovieLens &
        \cameraready{$\text{Enc}_{1\to 1, \phi}(64, 16, 8)_{\times3}$-$\text{Enc}_{1\to 3, \phi}(64, 16, 8)$-MLP(2, 64, 1)}\\
        \\[-1em] & Drug &
        \cameraready{$\text{Enc}_{1\to 1, \phi}(64, 16, 8)_{\times3}$-$\text{Enc}_{1\to 3, \phi}(64, 16, 8)$-MLP(2, 64, 1)}\\
        \\[-1em]\Xhline{2\arrayrulewidth}
    \end{tabular}
\end{adjustbox}
\end{subtable}
\end{table}

In this section, we provide detailed information of the datasets and models used in our experiments in Section~\ref{sec:experiments}.
We provide the dataset statistics in Table~\ref{table:dataset_statistics}, and model architectures in Table~\ref{table:architectures}.

\subsubsection{Implementation details of higher-order Transformers}\label{sec:implementation_details}
\cameraready{
In formulation of higher-order Transformers in the main text, for simplicity we omitted layer normalization (LN) \cite{ba2016layer} and used $\text{ReLU}$ non-linearity for $\text{MLP}_{l\to l}$.
In actual implementation, we adopt Pre-Layer Normalization (PreLN) \cite{xiong2020on}, and place layer normalization before $\text{Attn}_{k\to l}$, before $\text{MLP}_{l\to l}$, and before the output linear projection after the last $\text{Enc}_{k\to l}$.
We also use $\text{GeLU}$ non-linearity \cite{hendrycks2018gaussian} in $\text{MLP}_{l\to l}$ instead of ReLU.
This setup worked robustly in all experiments.
As additional details, we set the internal dimension of $\text{MLP}_{l\to l}$ same as the input and output dimension ($d_F=d$), and applied dropout \cite{srivastava2014dropout} within $\text{Attn}_{k\to l}$ and $\text{MLP}_{l\to l}$ to prevent overfitting.
}

\subsubsection{Efficient implementation of \texorpdfstring{$1\to k$}{1-->k} layers}\label{sec:implementation_details_hyperedge}
\cameraready{
For $k$-uniform hyperedge prediction in Sec.~\ref{sec:experiments}, implementing the higher-order layers $\text{Enc}_{1\to k}$ and $L_{1\to k}$ can be challenging due to the large number of equivalence classes, $\text{b}(1+k)$.
However, we found that it can be reduced to $1+k$ without any approximation.
Specifically, we show the following:
\begin{property}\label{property:1tok}
For $L_{1\to k}$ or $\text{Enc}_{1\to k}$, if we only consider $k$-uniform output hyperedges (output hyperedges without loops; $\mathbf{j}$-th output where $\mathbf{j}_1, ..., \mathbf{j}_k$ are unique), the layers can be implemented using only $1+k$ equivalence classes instead of $b(1+k)$.
\end{property}
\cutsectionup
\begin{proof}
As we only care about output hyperedges with unique index elements, only equivalence classes that correspond to partitions of $[k+1]$ with entries $[k]$ contained in disjoint subsets contribute to output.
There are exactly $1+k$ such partitions depending on which subset the last entry $(k+1)$ belongs to, so it is sufficient that we have $1+k$ equivalence classes.
\end{proof}
\cutsubsectionup
From Property~\ref{property:1tok}, we implement the layers $\text{Enc}_{1\to k}$ and $L_{1\to k}$ by only considering the $1+k$ equivalence classes that contribute to $k$-uniform output hyperedges.
}

\subsubsection{\cameraready{Runtime and memory analysis}}\label{sec:runtime_memory_details}
\cameraready{
For runtime and memory analysis, we used Barabási-Albert random graphs that are made by iteratively adding nodes, where each added node links to 5 random previous nodes.
The experiment was done using a single RTX 6000 GPU with 22GB.
We repeated the experiment 10 times with different random seeds for graph generation and reported the average; variance was generally low.
The architectures of the experimented second-order models are provided in Table~\ref{table:resource_architecture}.
}

\subsubsection{Synthetic chains}\label{sec:chains_details}
For synthetic chains experiment, we used a small dataset composed of 40 training chains each with 20 nodes, and 20 test chains each with 200 nodes \cameraready{as in Table~\ref{table:chain_dataset}}.
Each chain is randomly assigned with a binary label, which is encoded as one-hot vector at a terminal node.
The goal is to classify all nodes in the chain according to the label.
As evaluation metrics, we used macro-/micro-F1 that give combined node-wise F1 scores across all test chains.
All models, including baselines, have fixed hyperparameters with 2 layers and 16 hidden dimensions.
Detailed architectures are provided in Table~\ref{table:chain_architecture}.
For update function of GIN-0, we used an MLP with of 2 hidden layers followed by batchnorm (Linear(16)-ReLU-Linear(16)-ReLU-BN).
For GAT, we used 8 attention heads followed by channelwise sum.
\cameraready{For second-order Transformers, we used a simplified architecture with a single attention head.}
We trained all models with binary cross-entropy loss and Adam optimizer \cite{kingma2015adam} with learning rate 1e-3 and batch size 16 for 100 epochs.

\subsubsection{\cameraready{Large-scale graph regression}}\label{sec:regression_details}
\cameraready{
For large-scale graph regression, we used the PCQM4M-LSC quantum chemistry regression dataset from OGB-LSC benchmark \cite{hu2021ogb}, one of the largest datasets up to date that contains $3.8\text{M}$ molecular graphs.
We provide the summary statistics of the dataset in Table~\ref{table:regression_dataset}.
As the test set is unavailable, we report and compare the Mean Absolute Error (MAE) measured on the validation set.
}

\cameraready{
Table~\ref{table:regression_architecture} gives the architectures of the models used in our experiment.
For second-order models ($\text{MLP}_\pi$ and Ours~(S,~$\phi$)), we used both node and edge types as input information.
For vanilla (first-order) Transformer that operates on node features only, we used Laplacian graph embeddings \cite{belkin2003laplacian, dwivedi2020a} in addition to node types so that the model can consider edge structure information.
The embeddings are computed by factorizing the graph Laplacian matrix \cite{dwivedi2020a}:
\begin{equation}
    \Delta = I - D^{-1/2}AD^{-1/2}=U^\top\Lambda U,
\end{equation}
where $A$ is the adjacency matrix, $D$ is the degree matrix, and $\Lambda, U$ are the eigenvalues and eigenvectors respectively.
Following prior work \cite{dwivedi2020a}, we used the $k$ smallest eigenvectors of a node.
}

\cameraready{
We trained all models with L1 loss using AdamW optimizer \cite{loshchilov2019decoupled} with batch size 1024 on 8 RTX 3090 GPUs.
For all models, we used dropout rate of 0.1 to prevent overfitting.
For the full schedule, we trained our model for 1M steps, and applied linear learning rate warm-up \cite{vaswani2017attention} for 60k steps up to 2e-4 followed by linear decay to 0.
For the short schedule (* in Table~\ref{table:graph_regression}), we trained the models for 100k steps, and applied learning rate warm-up for 5k steps up to 1e-4 followed by decay to 0.
}

\subsubsection{Set-to-graph prediction}\label{sec:set2graph_details}
For set-to-graph prediction experiment, we borrow the datasets, code, and baseline scores from Serviansky~et.~al.~(2020)~\cite{serviansky2020set}.
\cameraready{We provide the summary statistics of the datasets in Table~\ref{table:set2graph_dataset}.}

\cameraready{
As in main text, Jets is a dataset where the task is to infer partition of a set of observed particles.
By viewing each partition as a fully-connected graph, the task becomes graph prediction problem.
Each data instance contains 2-14 nodes, each having 10-dimensional features.
The entire dataset contains 0.9M instances, divided into 60/20/20\% train/val/test sets.
Evaluation is done with 3 metrics: F1 score, Rand Index (RI), and Adjusted Rand Index (ARI) which is computed as $\text{ARI} = (\text{RI}-\mathbb{E}[\text{RI}])/(1-\mathbb{E}[\text{RI}])$.
To ensure that the model's prediction gives a correct partitioning, a postprocessing is applied to convert every connected components to cliques.
The test set is further separated into 3 types: bossom(B)/charm(C)/light(L), depending on underlying data generation process.
This makes typical \# of partitions in each set different.
Among the baselines, GNN is a message-passing GNN \cite{gilmer2017neural} that operate on k-NN induced graph for $k=5$, where edge prediction is done with pairwise dot-product.
AVR is an algorithmic baseline typically used in particle physics.
}

\cameraready{
As in main text, Delaunay datasets involve 2D point sets where the task is performing Delaunay triangulation.
Evaluation metrics are typical Accuracy/Precision/Recall/F1 scores based on edge-wise binary classification on held-out test set.
The baselines are similar to Jets; GNN0/5/10 are message-passing GNNs \cite{gilmer2017neural} that operate on k-NN induced graph for $k\in\{0, 5, 10\}$.
}

\cameraready{
Table~\ref{table:set2graph_architecture} provides the architecture of the models used in our experiment, along with relevant baselines S2G/S2G+ from Serviansky~et.~al.~(2020)~\cite{serviansky2020set}.
S2G uses a subset of equivalence classes~($\mu$) within $L_{1\to 2}$, and S2G+ uses full basis\footnote{Note that the implementation of linear layers in Serviansky~et.~al.~(2020)~\cite{serviansky2020set} is slightly different from ours.}.
Our models, both (D) and (D,~$\phi$), are made by substituting $\text{Enc}_{1\to 1}$ and $\text{Enc}_{1\to 2}$ into S2G+.
All models were trained with Adam optimizer to minimize the combination of soft F1 score and binary cross-entropy of edge prediction.
For all models, we used dropout rate of 0.1 to prevent overfitting.
For Jets, with 400 max epochs, the training is early-stopped based on validation F1 score with 20-epoch tolerance.
We used learning rate 1e-4 and batch size 512 for our models, while S2G/S2G+ used learning rate 1e-3 and batch size 2048 \cite{serviansky2020set}.
For Delaunay, with 100 max epochs, we used learning rate 1e-4 for our models, and used batch size 32/16 for Delaunay~(50)/(20-80);
S2G/S2G+ used learning rate 1e-3 and batch size 64 \cite{serviansky2020set}.
For Ours~(D,~$\phi$) in Delaunay (20-80), we applied 1-epoch warmup to prevent early training instability.
}

\subsubsection{\cameraready{\texorpdfstring{$k$}{k}-uniform hyperedge prediction}}\label{sec:hyperedge_details}
\cameraready{
For $k$-uniform hyperedge prediction experiment, we borrow the datasets, code, and baseline scores from Zhang~et.~al.~(2020)~\cite{zhang2020hyper}.
As in the main text, we used three datasets for transductive 3-edge prediction.
The first dataset GPS contains (user-location-activity) hyperedges.
The second dataset MovieLens contains (user-movie-tag) hyperedges.
The third dataset Drug contains (user-drug-reaction) hyperedges.
We provide the summary statistics in Table~\ref{table:chain_dataset}.
}

\cameraready{
The experiments were done in a transductive setup, where the hyperedge set is randomly split into the training and test set with 4:1 ratio.
We randomly sampled negative edges to be 5 times the amount of the positive edges, so that hyperedge prediction becomes binary classification problem.
Thus, the evaluation is done with AUC and AUPR scores.
Among the baselines, for Hyper-SAGNN, we reproduced the scores using the open-sourced code \cite{zhang2020hyper} using the provided hyperparameters.
For additional baselines including node2vec, we take the scores reported in Zhang~et.~al.~(2020)~\cite{zhang2020hyper}.
}

\cameraready{
Table~\ref{table:hyperedge_architecture} gives the architecture of the models used in our experiment.
As Hyper-SAGNN uses autoencoder-based node features, for proper comparison we also adopted and trained them jointly with the full model \cite{zhang2020hyper}.
All models (including reproduced Hyper-SAGNN) were trained with Adam optimizer to minimize the combination of binary cross-entropy loss and autoencoder reconstruction loss for 300 epochs with learning rate 1e-3 and batch size 96.
For S2G+~(S) and Ours~(S,~$\phi$), we applied dropout rate of 0.1 to the hidden layers of MLP after $L_{1\to3}$ or $\text{Enc}_{1\to3}$ to prevent overfitting.
}

\subsection{Potential negative social impacts}
Our framework can be potentially applied to a variety of tasks involving relational data, e.g., molecular structures, social networks, 3D mesh, etc.
Advancements in those directions might incur negative side-effects such as low-cost biochemical weapon, deepening of filter bubbles from enhanced personalized social network services, surveillance with mesh-based face recognition, etc.
Such potential negative impacts should be addressed as we conduct domain-specific follow-up works.


\end{document}